\def\eqref#1{equation~\ref{#1}}
\def\1{\bm{1}}
\def\vs{{\bm{s}}}
\DeclareMathAlphabet{\mathsfit}{\encodingdefault}{\sfdefault}{m}{sl}
\SetMathAlphabet{\mathsfit}{bold}{\encodingdefault}{\sfdefault}{bx}{n}
\newcommand{\R}{\mathbb{R}}
\crefname{section}{Sec.}{Secs.}
\Crefname{section}{Section}{Sections}
\Crefname{table*}{table*}{Tables}
\crefname{table*}{Tab.}{Tabs.}
\theoremstyle{definition}
\theoremstyle{plain}
\newtheorem{theorem}{Theorem}[section]
\def\R{\mathbb{R}}
\newcommand{\bd}[1]{\boldsymbol{#1}}
\newcommand{\best}[1]{{\textcolor{purple}{\textbf{#1}}}}
\newcommand{\sbest}[1]{{\textcolor{blue}{\textit{#1}}}}
\def\bigoh{\mathcal{O}}
\definecolor{LightCyan}{rgb}{0.88,1,1}
\DeclareRobustCommand\onedot{\futurelet\@let@token\@onedot}
\def\@onedot{\ifx\@let@token.\else.\null\fi\xspace}
\def\eg{\emph{e.g}\onedot} 
\def\ie{\emph{i.e}\onedot} 
 \def\vs{\emph{vs}\onedot}
\DeclareMathOperator\supp{supp}
\title{Exploiting Low-Dimensional Manifold of Features for Few-Shot Whole Slide Image Classification}
\author{Conghao Xiong$^{1,2}$\quad
Zhengrui Guo$^{3}$\quad Zhe Xu$^{1}$\quad
Yifei Zhang$^{4}$\quad Raymond Kai-yu Tong$^{1}$\\
\textbf{Si Yong Yeo}$^{2,5,6}$\quad \textbf{Hao Chen}$^{3}$\quad
\textbf{Joseph J. Y. Sung}$^{2,5}$\quad \textbf{Irwin King}$^{1}$ \\
$^1$The Chinese University of Hong Kong\quad $^2$Centre of AI in Medicine, Singapore\\
$^3$The Hong Kong University of Science and Technology\quad $^4$Nanyang Technological University\\
$^5$Lee Kong Chian School of Medicine, Nanyang Technological University\\
$^6$MedVisAI Lab, Singapore\quad Email: \texttt{\{chxiong21,king\}@cse.cuhk.edu.hk}
}
\begin{document}

\maketitle

\begin{abstract}
Few-shot Whole Slide Image (WSI) classification is severely hampered by overfitting. We argue that this is not merely a data-scarcity issue but a fundamentally geometric problem. Grounded in the manifold hypothesis, our analysis shows that features from pathology foundation models exhibit a low-dimensional manifold geometry that is easily perturbed by downstream models. This insight reveals a key potential issue in downstream multiple instance learning models: linear layers are geometry-agnostic and, as we show empirically, can distort the manifold geometry of the features. To address this, we propose the Manifold Residual (MR) block, a plug-and-play module that is explicitly geometry-aware. The MR block reframes the linear layer as residual learning and decouples it into two pathways: (1) a fixed, random matrix serving as a geometric anchor that approximately preserves topology while also acting as a spectral shaper to sharpen the feature spectrum; and (2) a trainable, low-rank residual pathway that acts as a residual learner for task-specific adaptation, with its structural bottleneck explicitly mirroring the low effective rank of the features. This decoupling imposes a structured inductive bias and reduces learning to a simpler residual fitting task. Through extensive experiments, we demonstrate that our approach achieves state-of-the-art results with significantly fewer parameters, offering a new paradigm for few-shot WSI classification. Code is available in \url{https://github.com/BearCleverProud/MR-Block}.
\end{abstract}

\section{Introduction}
Histopathology is the gold standard for disease diagnosis~\citep{chen2024uni,guo2024histgen,xiong2024mome,xiong2024takt}, and its computational analysis via Whole Slide Images (WSIs) operates under a uniquely challenging paradigm defined by two constraints. First, the gigapixel nature of WSIs makes Multiple Instance Learning (MIL) the de facto setting, where each slide is represented as a bag of patch features~\citep{ilse2018attention,li2021dual,zhang2022dtfd}. Second, expert-intensive annotation limits supervision, yielding datasets that are both small (few-shot) and weakly labeled, typically with only slide-level labels~\citep{lu2021data}. This few-shot, weakly supervised setting forces models to identify discriminative patterns from thousands of unlabeled instances using only a handful of slide-level labels~\citep{xiong2023diagnose,guo2025focus}, often leading to severe overfitting.

To understand the root cause of overfitting, we \textit{look beyond the learning algorithm and into the intrinsic structure of the features themselves}. Grounded in the manifold hypothesis~\citep{tenenbaum2000global}, we investigate the intrinsic geometry of Camelyon16~\citep{litjens_1399_2018} under different feature extractors, including CONCH~\citep{lu2024avisionlanguage}, UNI~\citep{chen2024uni}, and ResNet-50~\citep{he2016deep} (UNI and ResNet-50 results are reported in \cref{sec:spectral_results}), and present multi-faceted evidence that these representations lie on a low-dimensional, nonlinear manifold. First, spectral analysis~\citep{skean2025layer} reveals a low effective rank of 29.7 (\vs the 512-dimensional CONCH features), confirming low dimensionality (\cref{fig:fig1}(a)). Second, t-SNE visualizations reveal a distinct cluster topology (\cref{fig:fig1}(b)). Finally, and most critically, tangent space analysis shows a non-flat, distance-dependent growth of geometric drift (\cref{fig:fig1}(d), Original), providing quantitative evidence of intrinsic manifold curvature, thus excluding the pure linear subspace hypothesis.

Therefore, we posit that one important contributing factor to overfitting is geometric: \textit{while features from pathology foundation models exhibit a low-dimensional manifold geometry, existing MIL models often fail to preserve this fragile structure}. This is not incidental but a limitation rooted in their ubiquitous building block: the linear layer. Across MIL models, \eg ABMIL~\citep{ilse2018attention,lu2021data,li2021dual}, Transformers~\citep{shao2021transmil,xiong2023diagnose,tang2024feature}, graph-based models~\citep{chen2021patchgcn,li2024dynamic}, information-bottleneck~\citep{zhang2024prototypical,huang2024free} and density-based~\citep{zhu2024dgr} approaches, the linear layer serves as the indispensable component. However, the linear layer is inherently geometry-agnostic, lacking an architectural bias to respect the manifold structure. Our tangent space analysis provides direct, empirical evidence of this destructive tendency: as shown in \cref{fig:fig1}(d), a trained linear layer (Vanilla Linear in the figure) severely distorts the intrinsic geometry of the manifold compared to the original (blue line). Consequently, especially under the few-shot setting, these linear layers are prone to learning an overly complex mapping that does not respect the low-rank nature of features and distorts their structure, thereby discarding the geometric priors that the pathology foundation models learned during large-scale pretraining. This points to a fundamental research gap: a lack of mechanisms designed to actively preserve and leverage this crucial, yet fragile, manifold structure. 

\begin{figure}
    \centering
    \begin{subfigure}[b]{0.24\textwidth}
        \centering
        \includegraphics[width=\textwidth]{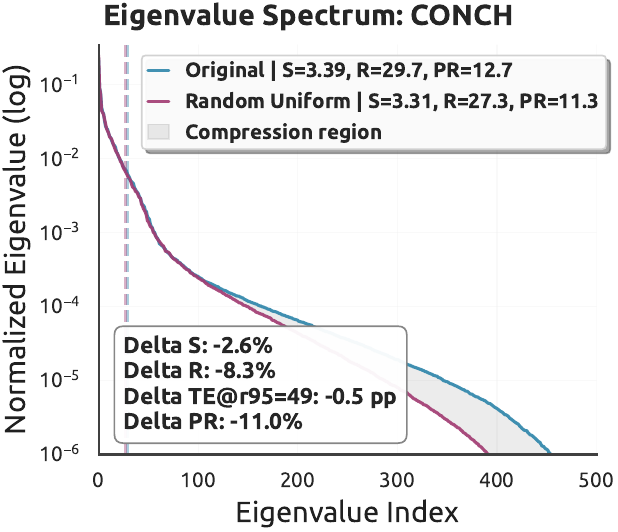}
    \end{subfigure}
    \hfill
    \begin{subfigure}[b]{0.24\textwidth}
        \centering
        \includegraphics[width=\textwidth]{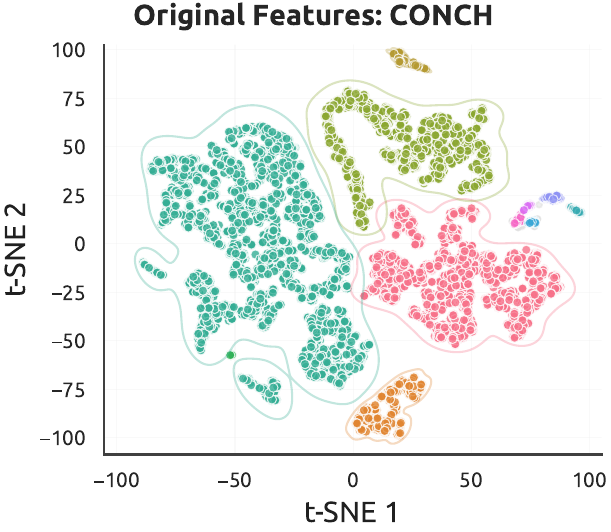}
    \end{subfigure}
    \hfill
    \begin{subfigure}[b]{0.24\textwidth}
        \centering
        \includegraphics[width=\textwidth]{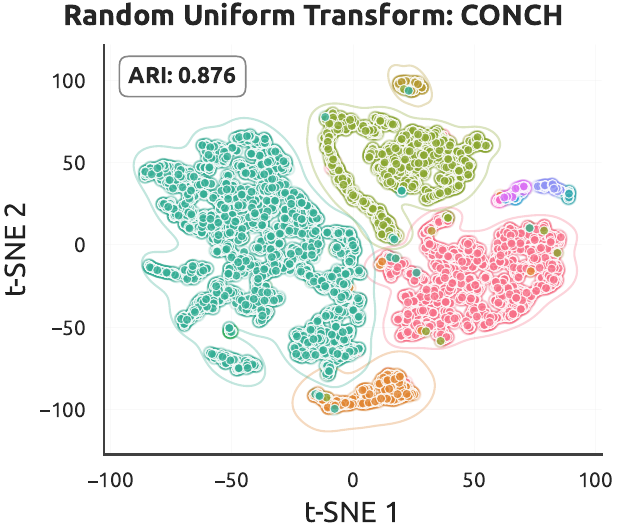}
    \end{subfigure}
    \hfill
    \begin{subfigure}[b]{0.24\textwidth}
        \centering
        \includegraphics[width=\textwidth]{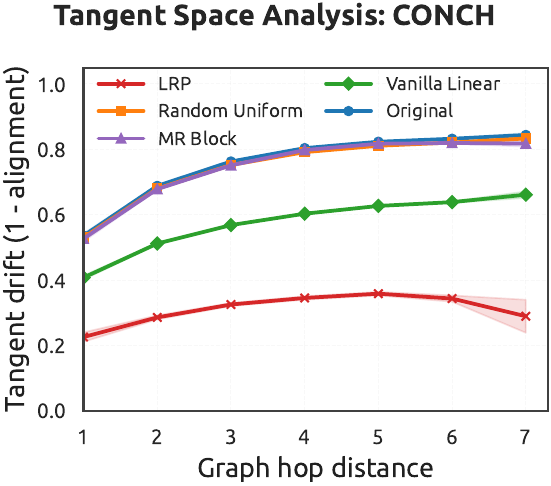}
    \end{subfigure}
    \hfill
    \caption{Feature geometric analysis reveals a low-dimensional manifold, which linear layers distort and our method preserves. (a) Spectral analysis confirms low-dimensionality. (b, c) t-SNE shows a cluster topology, and a random transformation preserves it. (d) Tangent space analysis visualizes this preservation: our MR block maintains the manifold curvature, while a linear layer causes distortion.}
    \label{fig:fig1}
\end{figure}

To bridge this research gap, we propose a plug-and-play Manifold Residual (MR) block. The MR block aims to better preserve the geometric structure of the pretrained features from the Pathology Foundation Models (PFMs). It reframes the feature mapping as residual learning and decouples it into two parallel paths: a frozen base path for geometric preservation and a trainable Low-rank Residual Path (LRP) for task-specific adaptation. The base path employs a fixed random matrix with Kaiming uniform initialization that serves as both a geometric anchor and a spectral sharpener. Theoretically, random projection theory shows that such a transformation provides a geometry-preserving embedding that, with high probability, approximately preserves pairwise Euclidean distances~\citep{johnson1984extensions,dasgupta2003elementary} and local geodesic structure~\citep{baraniuk2009random}, and acts as a near-isometry on tangent spaces under standard smoothness assumptions~\citep{papaspiliopoulos2020high}. Empirically, \cref{fig:fig1} shows that the geometric anchor preserves the cluster topology (c), retains the original distance-drift profile in tangent space (d), and sharpens the eigenvalue spectrum (a). The LRP employs a bottleneck structure to model the inherently low-rank task-specific residuals, explicitly mirroring the low-rank features. Although LRP alone substantially distorts the geometry (\cref{fig:fig1}(d), LRP), this behavior is expected because it is designed to learn task-specific residuals. The geometric anchor counterbalances this effect (\cref{fig:fig1}(d), Random Uniform). Together, the full MR block (\cref{fig:fig1}(d), MR Block) preserves the manifold geometry while capturing task-specific signals. The final representation is the sum of the two paths, balancing structural stability with adaptivity.

The contribution of our paper can be summarized as follows:
\begin{enumerate}
    \item We provide quantitative evidence that foundation-model features lie on a low-dimensional manifold and identify a common failure mode: linear layers can disrupt this fragile structure.
    \item We propose the theoretically grounded plug-and-play Manifold Residual block that mitigates manifold degradation by combining a geometric anchor with a low-rank residual.
    \item We conduct extensive experiments, including comparative experiments, ablation studies, and sensitivity analyses, to validate both the efficacy and efficiency of our method.
\end{enumerate}

\section{Related Work}
\subsection{Multiple Instance Learning}
In MIL, WSIs are tessellated into patches, which are then input into a pretrained neural network to obtain features. In terms of how slide-level predictions are computed from patch features, MIL can be categorized into two streams: \emph{instance-level} and \emph{embedding-level} approaches. In the instance-level paradigm, logits are first computed for each patch and then aggregated to yield a slide-level prediction \citep{campanella2019clinical,chikontwe2020multiple,hou2016patch,kanavati2020weakly,pan2025improving}. Conversely, embedding-level approaches first aggregate patch features via pooling or aggregator networks to form a slide-level embedding, which is subsequently used to generate the final prediction \citep{guo2025context,huang2024free,ilse2018attention,lin2023interventional,li2024dynamic,lu2021data,shao2021transmil,xiong2023diagnose,zhang2022dtfd,yu2025consurv}.

\subsection{Pathology Foundation Models and Few-shot WSI Classification}\label{sec:pfm}
Previously, feature extraction relied on ImageNet-pretrained ResNet~\citep{he2016deep}. However, the domain discrepancy between natural images and WSIs often limits their efficacy in the pathology domain. To address this, both vision and vision-language pathology foundation models have been developed \citep{xiong2025survey}. For example, Virchow~\citep{vorontsov2024virchow}, UNI~\citep{chen2024uni}, and GPFM~\citep{ma2024towards} are pretrained on pathology image datasets, and VLMs, including PLIP~\citep{huang2023plip}, CONCH~\citep{lu2024avisionlanguage}, mSTAR~\citep{xu2024multimodal} and MUSK~\citep{xiang2025musk}, leverage image-text pairs to enhance multimodal tasks. These pathology foundation models enable few-shot weakly-supervised learning (FSWSL) in WSI classification, which is especially appealing due to the substantial costs and labor incurred by WSI annotations. FSWSL was introduced in TOP \citep{qu2023rise}, and subsequent studies \citep{fu2024fast,li2024generalizable,qu2024pathology,shi2024vila,guo2025focus} have further expanded on this paradigm.

Beyond pathology-specific FSWSL, generic few-shot and metric-based paradigms~\citep{song2023comprehensive} can also operate on bag-level embeddings. Prototypical Networks~\citep{snell2017prototypical}, for example, replace the final linear classifier with a distance-based prototype head. In this work, we keep the standard linear readout used in prior WSI MIL studies and do not apply MR to this small classifier head. Instead, MR is inserted into internal linear layers of the MIL backbone, where our tangent space analysis reveals substantial distortion of the low-dimensional manifold of pretrained pathology features. Prototype-based heads endow the embedding space with a simple Euclidean metric but do not explicitly control this internal tangent drift. MR is in principle orthogonal to either linear or prototype-based readouts and can be used together with these two techniques.

\subsection{Geometric Deep Learning and Representation Analysis}
The manifold hypothesis, central to geometric deep learning, posits that high-dimensional data often lie on a low-dimensional manifold~\citep{tenenbaum2000global,fefferman2016testing,brahma2016why}. The geometric properties of this learned manifold are shaped by the pretraining objectives, for instance, contrastive methods like CLIP~\citep{radford2021learning} encourage a topology of well-separated semantic clusters~\citep{tian2021understanding}; self-distillation approaches like DINO~\citep{oquab2024dinov} preserve fine-grained local structures; while supervised learning often prioritizes linear separability. These manifolds can be quantitatively characterized using spectral analysis of the Gram matrix~\citep{skean2025layer}, which yields metrics like the effective rank~\citep{roy2007effective,vershynin2009high}. To further probe for nonlinearity, tangent space analysis directly measures the curvature of the manifold, with significant non-zero curvature being the signature of a nonlinear structure~\citep{papaspiliopoulos2020high}. Our work leverages these geometric tools to conduct a rigorous geometric analysis of the pathology features, revealing the properties our MR block is designed to preserve.

\section{Methodology}\label{sec:methodology}

We first formalize the bag-level MIL setting and introduce the geometric tools used to probe the feature manifold of the pathology foundation models, and then detail the Manifold Residual block, mitigating manifold degradation while enabling parameter-efficient task-specific adaptation. Finally, we present empirical evidence and theoretical support for our MR block.

\begin{figure}[!t]
    \centering
    \includegraphics[width=\linewidth]{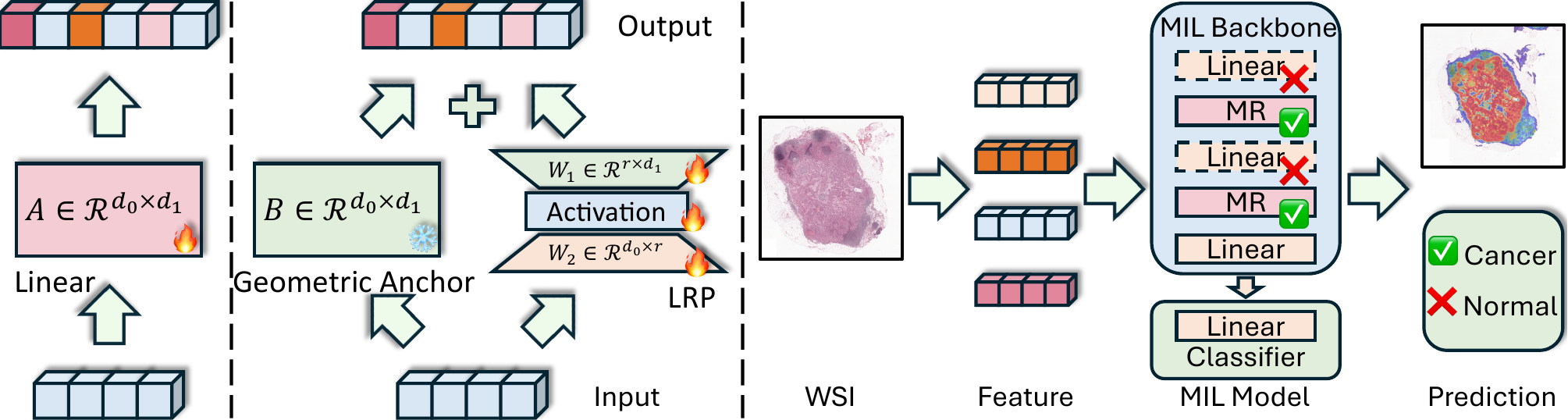}
    \caption{Illustration of the linear transformation (a), our proposed MR block (b), and the overall pipeline of MIL for WSI classification task and where our MR block takes effect (c). Specifically, the MR block is applied only to some of the internal linear layers within the MIL backbone, while the final linear classification head (the classifier) is left unchanged.}
    \label{fig:MR}
\end{figure}

\subsection{Preliminaries}\label{sec:preliminaries}

\subsubsection{Bag-level Multiple Instance Learning Formulation}
In this work, we focus on bag-level MIL. In MIL, we first partition the WSI into $N$ non-overlapping patches. A pretrained feature extractor then encodes each patch into a feature vector $\bd{f}_i \in \R^{d_p}$, where $d_p$ is the output dimension. The set of these patch features, known as the bag $\bd{F} = \{\bd{f}_i\}_{i=1}^N$, is aggregated by an MIL model to produce a single bag feature $\bd{f} = \operatorname{MIL}(\bd{F}) \in \R^{d_f}$, where $d_f$ is the dimension of the bag feature. Finally, this bag feature is fed into a classifier to obtain the slide-level logits $\bd{p} = \operatorname{CLS}(\bd{f}) \in \R^{C}$ with $C$ being the number of classes. Notably, most modern MIL models employ attention pooling to compute $\bd{f}$, which simultaneously yields both the bag feature and interpretable importance scores for each patch. This framework is illustrated in \cref{fig:MR} (right).

\subsubsection{Spectral Analysis of Feature Representations}\label{sec:spectral_analysis}
Spectral analysis is a classic technique used to estimate the intrinsic dimensionality of the feature representations. The process begins by computing the Gram matrix $\bd{K} = \bd{F}_n\bd{F}_n^\top$ from the normalized feature matrix $\bd{F}_n$. An eigendecomposition is then performed on the Gram matrix $\bd{K}$ to obtain the eigenvalues $\{\lambda_i(\bd{K})\}_{i=1}^N$, whose plotted distribution forms the spectral curve for visual analysis. For quantitative analysis, we normalize these eigenvalues into a probability distribution $p_i = \lambda_i(\bd{K}) / \sum_j \lambda_j(\bd{K})$ and use it to compute the Von Neumann entropy $S = - \sum_i p_i \log p_i$, and finally, the effective rank $R_{\mathrm{eff}} = \exp(S)$~\citep{skean2025layer} ($N \gg d_p$). 

\subsubsection{Tangent Space Analysis}\label{sec:tangent_analysis}
To probe for nonlinearity beyond the low-dimensionality, we measure the local curvature of the manifold via tangent space analysis~\citep{zhang2004principal}. This technique provides evidence of a nonlinear structure~\citep{lim2024tangent} by quantifying how the local geometry changes across the feature space. We approximate the geodesic paths by constructing a $k$-nearest-neighbor graph~\citep{tenenbaum2000global} ($k=12$) over the features $\bd{F}_n$ with cosine similarity and compute the local tangent space $\mathcal{T}_i$ at each point $\bd{f}_i$ using local PCA~\citep{kambhatla1997dimension}. Given the orthogonal bases of the $d_s$-dimensional tangent spaces $\bd{V}_i$ and $\bd{V}_j$, the Frobenius norm $\|\cdot\|_F$, the drift between the tangent spaces of two points, $\mathcal{T}_i$ and $\mathcal{T}_j$, can then be expressed as,
\begin{equation}
    \mathcal{D}(\mathcal{T}_i, \mathcal{T}_j) = 1 - \frac{1}{d_s} \left\| \bd{V}_i^\top \bd{V}_j \right\|_F^2.
    \label{eq:tangent_drift}
\end{equation}

We use these metrics as qualitative diagnostics rather than direct predictors of performance gains. In practice, a single geometric statistic cannot reliably forecast the exact improvement on each dataset and backbone, because the final performance also depends on various other factors, such as architecture design, which layers are replaced, label difficulty, and the number of shots. Therefore, we only use these metrics to describe how vanilla linear layers and our MR block transform the pretrained feature manifold, rather than as dataset-level performance predictors.

\subsection{Manifold Residual Block}

The design of the MR block is an architectural response to the geometric deficiencies of vanilla linear layers. We first detail its architecture, and then its empirical motivation and theoretical guarantees.

\subsubsection{Architecture}\label{sec:architecture}

To mitigate the geometric degradation introduced by a vanilla linear layer, we introduce the MR block, a parameter-efficient, plug-and-play substitute that is explicitly geometry-aware. As illustrated in \cref{fig:MR}(c), the MR block reframes a linear map as the sum of two pathways: a fixed geometric anchor for manifold preservation and a trainable LRP for task-specific adaptation.

Given the input $\bd{X} \in \mathbb{R}^{N\times d_0}$ with the input feature dimension being $d_0$, our MR block is given by,
\begin{equation}
    f_{\mathrm{MR}}(\bd{X}) = \operatorname{GELU}(\bd{X}\bd{W}_2) \bd{W}_1 + \bd{X}\bd{B},
\end{equation}
with output dimension $d_1$. Here $\operatorname{GELU}(\cdot)$ is the element-wise GELU activation function~\citep{hendrycks2016gaussian}; the LRP uses $\bd{W}_2\in\mathbb{R}^{d_0\times r}$ and $\bd{W}_1\in\mathbb{R}^{r\times d_1}$ with rank parameter $r\ll \min(d_0,d_1)$; the anchor uses a fixed random matrix $\bd{B}\in\mathbb{R}^{d_0\times d_1}$ initialized with Kaiming uniform distribution~\citep{he2015delving}. For initialization, we set $\bd{W}_1=\mathbf{0}$ and draw $\bd{W}_2$ from Kaiming uniform distribution, so the residual path initially contributes zero and $f_{\mathrm{MR}}(\bd{X})=\bd{X}\bd{B}$ at step zero. During training, the LRP is activated only if it improves the training objective. Therefore, even though \cref{fig:fig1}(d) shows that the tangent space drift of the LRP alone can be significant, the anchor balances this effect and the overall MR mapping better preserves manifold geometry.

Beyond preserving geometry, the MR block also reduces the number of trainable parameters. For any $r < (d_0d_1)/(d_0+d_1)$, the trainable parameter count of LRP, $r(d_0+d_1)$, is strictly fewer than that of a vanilla linear layer ($d_0d_1$), theoretically reducing the risk of overfitting (see \cref{sec:time_complexity} for details).

\subsubsection{Empirical Evidence}

First, \textit{we show that features are inherently low-dimensional by spectral analysis} (\cref{sec:spectral_analysis}). The eigenvalue spectrum decays rapidly, yielding an effective rank of $29.7$ ($\ll 512$). This provides quantitative evidence that variance is concentrated within a compact, low-dimensional structure.

Second, \textit{we provide evidence that the representation forms a curved manifold rather than a linear subspace}. To distinguish a curved manifold from a linear subspace, we probe nonlinearity via tangent space analysis (\cref{sec:tangent_analysis}). As shown in \cref{fig:fig1}(d), geometric drift increases with graph-hop distance. This nonzero curvature indicates a curved, nonlinear manifold geometry that a trained linear layer fails to preserve. This geometric fragility is the central problem the MR block is designed to address.

\subsubsection{Component Justification and Theoretical Guarantees}

\textit{The geometric anchor preserves the manifold structure that the vanilla linear layers distort}. The fixed random projection serves as a geometric anchor. Random projection theory shows that this transformation approximately preserves most of the essential geometric properties (see \cref{sec:proof_of_rand_proj_appendix}). \Cref{fig:fig1} empirically corroborates that the anchor approximately preserves cluster and neighborhood structure (\cref{fig:fig1}(c)), maintains the intrinsic curvature pattern in tangent space drift (\cref{fig:fig1}(d)), and sharpens the eigenvalue spectrum, improving spectral concentration and separation (\cref{fig:fig1}(a)).

\textit{The LRP enables task-specific adaptation}. Although our LRP is similar to LoRA~\citep{hu2022lora}, they serve fundamentally different purposes. The LRP structure is motivated by the low effective rank of features extracted by pathology foundation models. Its low-rank bottleneck imposes a structured inductive bias, ensuring that task-specific adaptation remains consistent with the low-rank nature of the manifold, as evidenced in \cref{fig:fig1}(a). This reduces learning to a simpler residual-fitting task.

\textit{Approximation behavior of the MR block}. Theoretically, our MR block can approximate any linear layer to arbitrary precision, thereby providing a worst-case performance guarantee: if the MR block cannot improve over the linear layer then it can match behavior of the target optimal linear layer arbitrarily closely, ensuring no loss in worst-case performance. In addition, rather than directly proving our formulation, we establish a more fundamental and challenging result without activation functions. Subsequently, with the universal approximation theorem \citep{barron1993universal,cybenko1989approximation,funahashi1989approximate,hornik1989multilayer,lu2020universal}, our architecture with nonlinear activation function can achieve even superior approximation precision. The proof is given in \cref{sec:proof_of_approximation}.

\section{Experiments and Results}\label{sec:exp}

\newcolumntype{C}[1]{>{\centering\arraybackslash}p{#1}}
\begin{table}[t!]
    \small
    \caption{Experimental results comparing our method with baselines are presented under various shot settings. Performance metrics in which our method outperforms the baseline are highlighted in italic blue and the best metrics are in bold red. ``\#P'' is the number of parameters (million).}
    \centering
    \resizebox{\textwidth}{!}{%
    \begin{tabular}{C{0.1cm}p{1.93cm}C{0.4cm}|C{1cm}C{1cm}C{1cm}C{1cm}C{1cm}C{1cm}C{1cm}C{1cm}C{1cm}}
    \toprule
    \bf \multirow{2}{*}{$k$} & \bf \multirow{2}{*}{Methods} & \bf \multirow{2}{*}{\#P} & \multicolumn{3}{c}{\textbf{Camelyon16}} & \multicolumn{3}{c}{\textbf{NSCLC}} & \multicolumn{3}{c}{\textbf{RCC}} \\
    & &  & \textbf{AUC$\uparrow$} & \textbf{F1$\uparrow$} & \textbf{Acc.$\uparrow$} & \textbf{AUC$\uparrow$} & \textbf{F1$\uparrow$} & \textbf{Acc.$\uparrow$} & \textbf{AUC$\uparrow$} & \textbf{F1$\uparrow$} & \textbf{Acc.$\uparrow$} \\

    \midrule
    \rowcolor{gray!10} \cellcolor{white}
    & ViLaMIL & 40.7 & 53.6$_{12.4}$ & 48.4$_{9.1}$ & 62.3$_{2.3}$ & 82.9$_{2.1}$ & 73.9$_{2.9}$ & 74.3$_{2.6}$ & 96.0$_{1.4}$ & 81.7$_{3.8}$ & 85.5$_{2.9}$ \\
    \rowcolor{gray!10} \cellcolor{white}
    & FOCUS & 86.9 & 63.3$_{16.5}$ & 56.4$_{18.2}$ & 68.7$_{9.0}$ & \best{87.9}$_{2.6}$  & 79.4$_{2.2}$  & 79.5$_{2.1}$ & 97.7$_{2.1}$  & 87.2$_{2.8}$  & 89.5$_{2.7}$ \\
    & ABMIL & 0.26 & 45.5$_{7.6}$ & 49.2$_{7.2}$ & 56.1$_{5.2}$ & 86.5$_{5.1}$ & \best{80.1}$_{5.0}$ & \best{80.2}$_{4.9}$ & 89.3$_{6.4}$ & 74.2$_{11.9}$ & 76.7$_{11.3}$\\
    \rowcolor{gray!20} \cellcolor{white}
     & \textbf{MR}-ABMIL & 0.10 & \sbest{46.1}$_{13.7}$ & 43.8$_{10.0}$ & \sbest{56.6}$_{5.2}$ & \sbest{86.8}$_{2.5}$ & 79.3$_{2.3}$ & 79.4$_{2.1}$ & \sbest{96.5}$_{3.6}$ & \sbest{85.5}$_{6.0}$ & \sbest{87.8}$_{5.5}$ \\
    & TransMIL & 2.41 & 51.9$_{10.3}$ & 46.7$_{7.5}$ & 51.0$_{9.2}$ & 79.9$_{2.9}$ & 71.0$_{2.8}$ & 71.6$_{2.5}$ & 96.5$_{1.3}$ & 83.5$_{2.3}$ & 86.6$_{1.7}$\\
    \rowcolor{gray!20} \cellcolor{white}
     & \textbf{MR}-TransMIL & 2.01 & \sbest{55.2}$_{11.0}$ & \sbest{50.9}$_{6.3}$ & \sbest{58.6}$_{6.5}$ & \sbest{83.0}$_{3.5}$ & \sbest{75.4}$_{2.6}$ & \sbest{75.5}$_{2.5}$ & \sbest{97.0}$_{1.2}$ & \sbest{84.0}$_{3.3}$ & \sbest{87.4}$_{2.5}$ \\
    & CATE & 1.93 & 59.7$_{12.4}$ & 50.0$_{9.8}$ & 58.3$_{6.7}$ & 81.4$_{3.4}$ & 72.3$_{4.1}$ & 72.8$_{3.6}$ & 96.9$_{1.8}$ & 83.9$_{2.9}$ & 86.5$_{2.5}$\\
    \rowcolor{gray!20} \cellcolor{white}
     & \textbf{MR}-CATE & 0.84 & \sbest{62.1}$_{16.9}$ & \sbest{58.4}$_{15.8}$ & \sbest{62.3}$_{13.5}$ & \sbest{87.0}$_{3.9}$ & \sbest{79.4}$_{3.2}$ & \sbest{79.4}$_{3.2}$ & \best{98.2}$_{1.2}$ & \best{88.7}$_{2.0}$ & \best{90.5}$_{1.8}$ \\
    & DGRMIL & 4.08 & \best{68.0}$_{24.6}$ & \best{59.4}$_{22.0}$ & \best{66.0}$_{18.4}$ & 71.5$_{10.6}$ & 60.3$_{9.5}$ & 62.4$_{7.7}$ & 96.4$_{0.9}$ & 85.3$_{2.4}$ & 87.4$_{1.7}$\\
    \rowcolor{gray!20} \cellcolor{white}
     & \textbf{MR}-DGRMIL & 3.29 & 65.4$_{19.1}$ & 55.7$_{18.4}$ & 63.6$_{15.2}$ & \sbest{80.1}$_{3.4}$ & \sbest{72.5}$_{3.0}$ & \sbest{73.1}$_{2.8}$ & 95.8$_{2.0}$ & 83.2$_{3.9}$ & 86.3$_{2.7}$ \\
    & RRTMIL & 2.44 & 57.8$_{4.4}$ & 54.8$_{5.2}$ & 57.1$_{6.3}$ & 65.7$_{4.8}$ & 60.8$_{2.9}$ & 61.5$_{3.3}$ & 96.0$_{1.9}$ & 81.6$_{2.7}$ & 84.7$_{2.3}$\\
    \rowcolor{gray!20} \cellcolor{white}
    \multirow{-12}{*}{\textbf{4}} & \textbf{MR}-RRTMIL & 2.04 & 46.5$_{3.8}$ & 42.7$_{3.6}$ & \sbest{62.9}$_{0.6}$ & \sbest{69.9}$_{6.3}$ & \sbest{63.9}$_{5.0}$ & \sbest{64.0}$_{5.0}$ & \sbest{96.7}$_{1.4}$ & \sbest{82.5}$_{2.7}$ & \sbest{86.1}$_{2.2}$ \\

    \midrule
    \rowcolor{gray!10} \cellcolor{white}
    & ViLaMIL & 40.7 & 74.0$_{17.2}$ & 65.9$_{22.1}$ & 71.8$_{16.5}$ & 88.4$_{5.2}$ & 79.2$_{5.7}$ & 79.2$_{5.7}$ & 96.8$_{0.9}$ & 86.2$_{2.1}$ & 88.0$_{1.7}$ \\
    \rowcolor{gray!10} \cellcolor{white}
    & FOCUS & 86.9 & 87.2$_{7.0}$  & 79.2$_{9.4}$  & 80.2$_{9.4}$ & 93.7$_{5.6}$  & 85.0$_{5.8}$  & 85.1$_{5.8}$ & 98.1$_{0.6}$  & 86.7$_{2.3}$  & 88.9$_{2.0}$ \\
    & ABMIL & 0.26 & 49.2$_{5.6}$ & 48.9$_{3.0}$ & 59.7$_{4.8}$ & 87.7$_{5.9}$ & 81.0$_{5.4}$ & 81.1$_{5.4}$ & 91.8$_{6.3}$ & 76.6$_{8.4}$ & 79.1$_{7.9}$\\
    \rowcolor{gray!20} \cellcolor{white}
     & \textbf{MR}-ABMIL & 0.10 & \sbest{56.5}$_{15.5}$ & \sbest{53.8}$_{13.9}$ & \sbest{61.9}$_{8.7}$ & \best{94.0}$_{3.3}$ & \best{85.9}$_{3.3}$ & \best{86.0}$_{3.3}$ & \sbest{98.1}$_{0.8}$ & \sbest{87.6}$_{2.9}$ & \sbest{89.6}$_{2.1}$ \\
    & TransMIL & 2.41 & 57.5$_{2.5}$ & 52.0$_{8.9}$ & 53.6$_{7.4}$ & 87.9$_{6.1}$ & 80.3$_{5.2}$ & 80.4$_{5.2}$ & 97.9$_{0.6}$ & 86.5$_{3.9}$ & 88.4$_{3.1}$\\
    \rowcolor{gray!20} \cellcolor{white}
     & \textbf{MR}-TransMIL & 2.01 & \sbest{65.3}$_{8.5}$ & \sbest{54.2}$_{9.3}$ & \sbest{63.6}$_{3.8}$ & \sbest{91.6}$_{4.1}$ & \sbest{82.4}$_{4.1}$ & \sbest{82.6}$_{4.0}$ & \sbest{97.9}$_{0.4}$ & \sbest{86.7}$_{2.5}$ & \sbest{89.1}$_{1.9}$ \\
    & CATE & 1.93 & 79.8$_{14.7}$ & 69.1$_{20.0}$ & 76.0$_{12.6}$ & 87.7$_{5.7}$ & 78.0$_{4.9}$ & 78.2$_{4.9}$ & 97.8$_{0.8}$ & 86.4$_{3.0}$ & 88.3$_{2.4}$\\
    \rowcolor{gray!20} \cellcolor{white}
     & \textbf{MR}-CATE & 0.84 & \sbest{88.0}$_{14.8}$ & \sbest{81.7}$_{16.0}$ & \sbest{82.9}$_{15.3}$ & \sbest{93.5}$_{5.3}$ & \sbest{84.6}$_{5.7}$ & \sbest{84.7}$_{5.6}$ & \best{98.6}$_{0.3}$ & \best{89.1}$_{1.3}$ & \best{90.7}$_{1.0}$ \\
    & DGRMIL & 4.08 & 84.7$_{10.0}$ & 72.5$_{17.8}$ & 77.4$_{12.6}$ & 85.5$_{6.3}$ & 73.4$_{8.6}$ & 74.2$_{7.7}$ & 96.7$_{1.1}$ & 86.8$_{2.9}$ & 88.7$_{2.3}$\\
    \rowcolor{gray!20} \cellcolor{white}
     & \textbf{MR}-DGRMIL & 3.29 & \best{93.4}$_{0.5}$ & \best{86.0}$_{3.5}$ & \best{86.7}$_{3.8}$ & \sbest{91.6}$_{7.3}$ & \sbest{82.8}$_{9.6}$ & \sbest{83.0}$_{9.4}$ & \sbest{97.6}$_{1.0}$ & 85.1$_{3.6}$ & 87.9$_{1.7}$ \\
    & RRTMIL & 2.44 & 68.8$_{17.8}$ & 62.6$_{15.4}$ & 63.7$_{15.2}$ & 82.5$_{9.9}$ & 73.9$_{6.7}$ & 74.2$_{6.5}$ & 97.0$_{1.1}$ & 85.2$_{3.3}$ & 86.9$_{3.2}$\\
    \rowcolor{gray!20} \cellcolor{white}
    \multirow{-12}{*}{\textbf{8}} & \textbf{MR}-RRTMIL & 2.04 & \sbest{85.3}$_{16.3}$ & \sbest{79.2}$_{16.6}$ & \sbest{81.7}$_{13.3}$ & \sbest{87.3}$_{7.8}$ & \sbest{79.6}$_{7.9}$ & \sbest{79.8}$_{7.8}$ & \sbest{97.8}$_{0.4}$ & \sbest{86.3}$_{1.5}$ & \sbest{87.8}$_{1.4}$ \\

    \midrule
    \rowcolor{gray!10} \cellcolor{white}
    & ViLaMIL & 40.7 & 81.4$_{23.1}$ & 77.5$_{22.0}$ & 82.0$_{15.0}$ & 92.8$_{2.9}$ & 84.9$_{2.7}$ & 85.0$_{2.7}$ & 97.4$_{1.1}$ & 85.5$_{3.0}$ & 87.9$_{2.2}$ \\
    \rowcolor{gray!10} \cellcolor{white}
    & FOCUS & 86.9 & 93.2$_{2.5}$  & \best{89.0}$_{3.0}$  & \best{89.8}$_{3.0}$ & \best{96.5}$_{1.5}$  & \best{89.3}$_{2.7}$  & \best{89.3}$_{2.7}$ & 98.2$_{0.6}$  & 89.0$_{2.1}$  & 90.6$_{1.6}$ \\
    & ABMIL & 0.26 & 73.2$_{17.1}$ & 71.2$_{15.4}$ & 75.7$_{10.2}$ & 93.6$_{2.4}$ & 87.8$_{3.4}$ & 87.8$_{3.4}$ & 94.7$_{5.2}$ & 81.0$_{7.8}$ & 82.8$_{8.5}$\\
    \rowcolor{gray!20} \cellcolor{white}
     & \textbf{MR}-ABMIL & 0.10 & \sbest{79.0}$_{23.1}$ & \sbest{74.5}$_{22.4}$ & \sbest{78.0}$_{17.1}$ & \best{96.5}$_{1.2}$ & \sbest{89.1}$_{1.0}$ & \sbest{89.1}$_{1.0}$ & \sbest{98.3}$_{0.7}$ & \sbest{89.1}$_{2.8}$ & \sbest{91.1}$_{2.0}$ \\
    & TransMIL & 2.41 & 69.7$_{15.7}$ & 62.5$_{19.2}$ & 69.3$_{13.3}$ & 95.6$_{1.7}$ & 88.2$_{3.6}$ & 88.2$_{3.6}$ & 98.4$_{0.3}$ & 89.3$_{1.6}$ & 90.9$_{0.7}$\\
    \rowcolor{gray!20} \cellcolor{white}
     & \textbf{MR}-TransMIL & 2.01 & \sbest{79.3}$_{17.4}$ & \sbest{75.1}$_{14.8}$ & \sbest{77.1}$_{13.2}$ & 93.2$_{4.4}$ & 84.0$_{6.1}$ & 84.1$_{6.0}$ & \best{98.6}$_{0.3}$ & 89.3$_{1.6}$ & \sbest{91.2}$_{1.0}$ \\
    & CATE & 1.93 & 92.2$_{3.1}$ & 87.5$_{3.2}$ & 88.5$_{2.8}$ & 94.2$_{2.1}$ & 85.1$_{2.0}$ & 85.2$_{2.0}$ & 97.9$_{0.6}$ & 87.4$_{2.7}$ & 89.3$_{2.0}$\\
    \rowcolor{gray!20} \cellcolor{white}
     & \textbf{MR}-CATE & 0.84 & \sbest{92.4}$_{3.2}$ & 86.7$_{3.9}$ & 87.8$_{3.4}$ & \sbest{96.1}$_{1.4}$ & \sbest{88.6}$_{3.3}$ & \sbest{88.7}$_{3.3}$ & \best{98.6}$_{0.6}$ & \best{89.8}$_{3.1}$ & \best{91.6}$_{2.3}$ \\
    & DGRMIL & 4.08 & 88.5$_{5.0}$ & 79.4$_{10.2}$ & 80.9$_{10.3}$ & 92.4$_{3.4}$ & 85.8$_{4.0}$ & 85.8$_{3.9}$ & 94.8$_{1.4}$ & 86.3$_{1.0}$ & 88.7$_{1.2}$\\
    \rowcolor{gray!20} \cellcolor{white}
     & \textbf{MR}-DGRMIL & 3.29 & \sbest{92.3}$_{5.8}$ & \sbest{87.6}$_{5.8}$ & \sbest{88.7}$_{5.2}$ & \sbest{93.9}$_{1.7}$ & \sbest{86.6}$_{2.7}$ & \sbest{86.7}$_{2.7}$ & \sbest{97.7}$_{0.8}$ & \sbest{88.0}$_{2.0}$ & \sbest{90.3}$_{1.5}$ \\
    & RRTMIL & 2.44 & 84.0$_{16.9}$ & 74.8$_{12.9}$ & 75.5$_{12.7}$ & 90.7$_{3.2}$ & 82.6$_{4.2}$ & 82.7$_{4.1}$ & 98.3$_{0.6}$ & 86.9$_{2.6}$ & 88.8$_{2.4}$\\
    \rowcolor{gray!20} \cellcolor{white}
    \multirow{-12}{*}{\textbf{16}} & \textbf{MR}-RRTMIL & 2.04 & \best{93.7}$_{1.6}$ & \best{89.0}$_{2.4}$ & \best{89.8}$_{2.3}$ & \sbest{91.0}$_{5.4}$ & 82.5$_{7.6}$ & \sbest{82.8}$_{7.1}$ & 97.8$_{0.8}$ & 86.5$_{4.1}$ & \sbest{88.9}$_{2.9}$ \\
    \bottomrule
    \end{tabular}
    }
    \label{tab:main_exp}
\end{table}

\subsection{Dataset Descriptions and Implementation Details}\label{sec:dataset}
\textbf{Datasets.} We conduct extensive experiments on Camelyon16 \citep{litjens_1399_2018}, TCGA-NSCLC and TCGA-RCC datasets. The details of the datasets used in this work are available in \cref{sec:dataset_appendix}.

\textbf{Evaluation Metrics.} We employ four complementary metrics for a comprehensive assessment of these data-imbalanced datasets: Area Under the Receiver Operating Characteristic Curve (AUC), Area Under the Precision-Recall Curve (AUPRC), macro F1-score, and Accuracy.

\textbf{Implementation Details.} To ensure a fair and robust comparison, we use a consistent set of hyperparameters and training and evaluation protocols for all methods, as detailed in \cref{sec:exp_hyper}. The specific locations where linear layers are replaced by MR blocks are described in \cref{sec:replacement_appendix}. The FLOP and the wall-clock inference time statistics are available in \cref{sec:computation_consumption}.

\begin{table}[t!]
    \small
    \centering
    \caption{Ablation studies of our proposed method across varying shot settings on the Camelyon16, TCGA-NSCLC, and TCGA-RCC datasets. Metrics highlighted in bold red indicate the top result.}
    \resizebox{\textwidth}{!}{%
    \begin{tabular}{C{0.1cm}p{1cm}C{0.4cm}|C{1cm}C{1cm}C{1cm}C{1cm}C{1cm}C{1cm}C{1cm}C{1cm}C{1cm}}
    \toprule
    \multirow{2}{*}{$k$} & \multirow{2}{*}{\bf Methods} & \multirow{2}{*}{\bf\#P} & \multicolumn{3}{c}{\textbf{Camelyon16}} & \multicolumn{3}{c}{\textbf{NSCLC}} & \multicolumn{3}{c}{\textbf{RCC}} \\
    & &  & \textbf{AUC$\uparrow$} & \textbf{F1$\uparrow$} & \textbf{Acc.$\uparrow$} & \textbf{AUC$\uparrow$} & \textbf{F1$\uparrow$} & \textbf{Acc.$\uparrow$} & \textbf{AUC$\uparrow$} & \textbf{F1$\uparrow$} & \textbf{Acc.$\uparrow$} \\

    \midrule
     & Ours & 2.01 & \best{71.4}$_{14.9}$ & \best{65.5}$_{12.1}$ & \best{69.9}$_{7.3}$ & \best{95.1}$_{3.5}$ & \best{86.1}$_{4.7}$ & \best{86.2}$_{4.6}$ & 96.8$_{1.3}$ & 84.3$_{5.4}$ & 87.0$_{4.4}$ \\
     & + $\bd{B}$ FT & 2.54 & 55.3$_{1.1}$ & 55.8$_{2.7}$ & 60.6$_{2.3}$ & 88.7$_{2.1}$ & 81.7$_{3.3}$ & 81.8$_{3.3}$ & 96.4$_{1.9}$ & 84.3$_{3.6}$ & 86.4$_{3.2}$ \\
     & - $\bd{B}$& 2.01 & 55.3$_{0.6}$ & 55.7$_{1.6}$ & 58.4$_{1.9}$ & 88.2$_{5.2}$ & 81.0$_{6.9}$ & 81.2$_{6.6}$ & 97.6$_{1.1}$ & 86.4$_{5.0}$ & 88.6$_{4.4}$ \\
     & - $\bd{Bx}$ & 2.01 & 56.3$_{2.5}$ & 55.8$_{2.9}$ & 59.1$_{2.7}$ & 92.7$_{2.7}$ & 84.7$_{3.4}$ & 84.9$_{3.4}$ & \best{98.1}$_{0.4}$ & \best{87.4}$_{3.7}$ & \best{89.3}$_{3.0}$ \\
     \multirow{-5}{*}{\textbf{8}} & - LRP & 1.89 & 51.0$_{17.7}$ & 49.4$_{14.2}$ & 58.9$_{8.8}$ & 89.4$_{6.3}$ & 81.6$_{5.8}$ & 81.7$_{5.8}$ & \best{98.1}$_{0.8}$ & 86.9$_{2.8}$ & \best{89.3}$_{2.1}$ \\

    \midrule
     & Ours & 2.01 & \best{86.4}$_{11.7}$ & \best{80.9}$_{12.0}$ & \best{81.9}$_{11.8}$ & \best{96.0}$_{1.7}$ & \best{88.7}$_{1.4}$ & \best{88.7}$_{1.4}$ & 98.3$_{0.8}$ & 87.7$_{2.3}$ & 90.0$_{2.0}$ \\
     & + $\bd{B}$ FT & 2.54 & 51.3$_{9.4}$ & 47.2$_{7.3}$ & 61.1$_{4.5}$ & 93.1$_{2.4}$ & 87.2$_{1.2}$ & 87.2$_{1.2}$ & 97.4$_{1.1}$ & 85.4$_{2.8}$ & 87.9$_{2.4}$ \\
     & - $\bd{B}$ & 2.01 & 61.9$_{13.6}$ & 52.8$_{8.3}$ & 63.3$_{7.4}$ & 91.9$_{2.4}$ & 84.7$_{2.4}$ & 84.8$_{2.4}$ & 97.7$_{1.3}$ & 86.0$_{4.1}$ & 88.3$_{3.6}$ \\
     & - $\bd{Bx}$ & 2.01 & 72.2$_{19.8}$ & 67.6$_{19.9}$ & 73.5$_{13.0}$ & 95.5$_{0.7}$ & 87.4$_{1.0}$ & 87.4$_{1.0}$ & \best{98.5}$_{0.4}$ & \best{88.4}$_{2.1}$ & \best{90.6}$_{1.6}$ \\
     \multirow{-5}{*}{\textbf{16}} & - LRP & 1.89 & 68.3$_{20.2}$ & 65.1$_{17.8}$ & 69.8$_{12.8}$ & 93.7$_{2.0}$ & 86.4$_{1.1}$ & 86.4$_{1.0}$ & 98.3$_{0.6}$ & 87.0$_{1.3}$ & 89.6$_{1.0}$\\
    \bottomrule
    \end{tabular}
    }
    \label{tab:ablation}
\end{table}

\begin{table}[t!]
    \small
    \caption{Capacity-matched comparison between ABMIL and MR-ABMIL under various shot settings. MR-ABMIL uses an MR block with the same number of trainable parameters as the vanilla ABMIL.}
    \centering
    \resizebox{\textwidth}{!}{%
    \begin{tabular}{C{0.1cm}p{1.8cm}C{0.4cm}|C{1cm}C{1cm}C{1cm}C{1cm}C{1cm}C{1cm}C{1cm}C{1cm}C{1cm}}
    \toprule
    \bf \multirow{2}{*}{$k$} & \bf \multirow{2}{*}{Methods} & \bf \multirow{2}{*}{\#P} & \multicolumn{3}{c}{\textbf{Camelyon16}} & \multicolumn{3}{c}{\textbf{NSCLC}} & \multicolumn{3}{c}{\textbf{RCC}} \\
    & &  & \textbf{AUC$\uparrow$} & \textbf{F1$\uparrow$} & \textbf{Acc.$\uparrow$} & \textbf{AUC$\uparrow$} & \textbf{F1$\uparrow$} & \textbf{Acc.$\uparrow$} & \textbf{AUC$\uparrow$} & \textbf{F1$\uparrow$} & \textbf{Acc.$\uparrow$} \\
    \midrule

    & ABMIL & 0.26 & 45.5$_{7.6}$ & 49.2$_{7.2}$ & 56.1$_{5.2}$ & \best{86.5}$_{5.1}$ & \best{80.1}$_{5.0}$ & \best{80.2}$_{4.9}$ & 89.3$_{6.4}$ & 74.2$_{11.9}$ & 76.7$_{11.3}$\\
    \rowcolor{gray!20} \cellcolor{white}
    \multirow{-2}{*}{\textbf{4}} & \textbf{MR}-ABMIL & 0.26 & \best{56.6}$_{16.3}$ & \best{54.0}$_{14.7}$ & \best{58.4}$_{13.2}$ & 78.7$_{4.0}$ & 70.8$_{3.7}$ & 71.1$_{3.5}$ & \best{97.7}$_{1.3}$ & \best{86.9}$_{2.5}$ & \best{89.3}$_{2.6}$ \\

    \midrule
    & ABMIL & 0.26 & 49.2$_{5.6}$ & 48.9$_{3.0}$ & 59.7$_{4.8}$ & 87.7$_{5.9}$ & \best{81.0}$_{5.4}$ & \best{81.1}$_{5.4}$ & 91.8$_{6.3}$ & 76.6$_{8.4}$ & 79.1$_{7.9}$\\
    \rowcolor{gray!20} \cellcolor{white}
    \multirow{-2}{*}{\textbf{8}} & \textbf{MR}-ABMIL & 0.26 & \best{73.2}$_{15.0}$ & \best{70.7}$_{15.9}$ & \best{72.6}$_{15.8}$ & \best{88.9}$_{8.5}$ & 80.0$_{8.1}$ & 80.1$_{8.2}$ & \best{98.6}$_{0.6}$ & \best{88.7}$_{3.2}$ & \best{90.9}$_{2.9}$ \\

    \midrule
    & ABMIL & 0.26 & 73.2$_{17.1}$ & 71.2$_{15.4}$ & 75.7$_{10.2}$ & 93.6$_{2.4}$ & 86.9$_{2.8}$ & 87.0$_{2.8}$ & 94.7$_{5.2}$ & 81.0$_{7.8}$ & 82.8$_{8.5}$\\
    \rowcolor{gray!20} \cellcolor{white}
    \multirow{-2}{*}{\textbf{16}} & \textbf{MR}-ABMIL & 0.26 & \best{84.3}$_{11.7}$ & \best{79.3}$_{11.2}$ & \best{80.3}$_{11.1}$ & \best{93.6}$_{1.9}$ & \best{87.8}$_{3.4}$ & \best{87.8}$_{3.4}$ & \best{98.8}$_{0.3}$ & \best{89.7}$_{2.0}$ & \best{91.7}$_{1.6}$ \\
    \bottomrule
    \end{tabular}
    }
    \label{tab:ablation_equal_capacity}
\end{table}

\begin{table}[t!]
    \small
    \centering
    \caption{Performance of MR-ABMIL with replaced linear layers under varying shot settings. ``U'' and ``V'' denote two matrices in MR-ABMIL.}
    \resizebox{\textwidth}{!}{%
    \begin{tabular}{C{0.1cm}p{1.1cm}C{0.4cm}|C{1cm}C{1cm}C{1cm}C{1cm}C{1cm}C{1cm}C{1cm}C{1cm}C{1cm}}
    \toprule
    \multirow{2}{*}{$k$} & \multirow{2}{*}{\bf Methods} & \multirow{2}{*}{\bf \#P} & \multicolumn{3}{c}{\textbf{Camelyon16}} & \multicolumn{3}{c}{\textbf{NSCLC}} & \multicolumn{3}{c}{\textbf{RCC}} \\
    & &  & \textbf{AUC$\uparrow$} & \textbf{F1$\uparrow$} & \textbf{Acc.$\uparrow$} & \textbf{AUC$\uparrow$} & \textbf{F1$\uparrow$} & \textbf{Acc.$\uparrow$} & \textbf{AUC$\uparrow$} & \textbf{F1$\uparrow$} & \textbf{Acc.$\uparrow$} \\

    \midrule
     & $\bd{V + U}$ & 0.10 & 46.1$_{13.7}$ & 43.8$_{10.0}$ & 56.6$_{5.2}$ & \best{86.8}$_{2.5}$ & \best{79.3}$_{2.3}$ & \best{79.4}$_{2.1}$ & \best{96.5}$_{3.6}$ & \best{85.5}$_{6.0}$ & \best{87.8}$_{5.5}$ \\
     & $\bd{V}$ & 0.18 & 45.6$_{0.3}$ & 43.9$_{0.2}$ & \best{59.8}$_{1.4}$ & 86.1$_{3.5}$ & 78.2$_{3.2}$ & 78.4$_{3.1}$ & 94.0$_{4.8}$ & 79.0$_{7.5}$ & 81.7$_{7.3}$ \\
    \multirow{-3}{*}{\textbf{4}} & $\bd{U}$ & 0.18 & \best{52.8}$_{10.5}$ & \best{47.9}$_{6.6}$ & 59.4$_{2.5}$ & 85.5$_{6.8}$ & 77.2$_{9.5}$ & 77.6$_{8.9}$ & 90.4$_{8.4}$ & 73.4$_{11.5}$ & 77.4$_{10.3}$ \\

    \midrule
     & $\bd{V + U}$ & 0.10 & 56.5$_{15.5}$ & 53.8$_{13.9}$ & 61.9$_{8.7}$ & 94.0$_{3.3}$ & 85.9$_{3.3}$ & 86.0$_{3.3}$ & \best{98.1}$_{0.8}$ & \best{87.6}$_{2.9}$ & \best{89.6}$_{2.1}$ \\
     & $\bd{V}$ & 0.18 & 71.4$_{14.9}$ & 65.5$_{12.1}$ & 69.9$_{7.3}$ & \best{95.1}$_{3.5}$ & \best{86.1}$_{4.7}$ & \best{86.2}$_{4.6}$ & 96.8$_{1.3}$ & 84.3$_{5.4}$ & 87.0$_{4.4}$ \\
    \multirow{-3}{*}{\textbf{8}} & $\bd{U}$ & 0.18 & \best{72.8}$_{19.1}$ & \best{67.8}$_{20.0}$ & \best{71.5}$_{16.6}$ & 94.6$_{4.7}$ & 85.8$_{6.8}$ & 85.9$_{6.7}$ & 95.1$_{3.8}$ & 79.8$_{11.7}$ & 83.6$_{7.9}$ \\
    \bottomrule
    \end{tabular}
    }
    \label{tab:ablation_position}
\end{table}

\begin{table}[t]
    \small
    \centering
    \caption{Experimental results on different initialization methods in MR-CATE. ``K.'' and ``X.'' stands for Kaiming and Xavier initialization. ``U.'' and ``N.'' stands for uniform and normal distribution. The number following is the input parameter for initialization. The gray rows stand for our main setting.}
    \resizebox{\textwidth}{!}{%
    \begin{tabular}{C{0.1cm}p{1.2cm}|C{1cm}C{1cm}C{1cm}C{1cm}C{1cm}C{1cm}C{1cm}C{1cm}C{1cm}}
    \toprule
    \multirow{2}{*}{$k$} & \multirow{2}{*}{Methods} & \multicolumn{3}{c}{\textbf{Camelyon16}} & \multicolumn{3}{c}{\textbf{NSCLC}} & \multicolumn{3}{c}{\textbf{RCC}} \\
    & & \textbf{AUC$\uparrow$} & \textbf{F1$\uparrow$} & \textbf{Acc.$\uparrow$} & \textbf{AUC$\uparrow$} & \textbf{F1$\uparrow$} & \textbf{Acc.$\uparrow$} & \textbf{AUC$\uparrow$} & \textbf{F1$\uparrow$} & \textbf{Acc.$\uparrow$} \\

    \midrule
     & K. N. 0 & 63.0$_{16.1}$ & 49.5$_{19.7}$ & \best{65.7}$_{11.1}$ & 82.2$_{6.2}$ & 75.0$_{5.7}$ & 75.2$_{5.5}$ & 98.0$_{1.4}$ & 87.8$_{1.8}$ & 90.1$_{1.8}$ \\
     & K. U. 0 & 62.4$_{17.0}$ & 48.9$_{16.9}$ & 63.7$_{10.5}$ & 86.5$_{4.4}$ & 79.4$_{3.2}$ & \best{79.4}$_{3.1}$ & 98.1$_{1.4}$ & 88.2$_{3.0}$ & 90.3$_{2.5}$ \\
     \rowcolor{gray!20}
     & K. U. $\sqrt{5}$ & 62.1$_{16.9}$ & \best{58.4}$_{15.8}$ & 62.3$_{13.5}$ & \best{87.0}$_{3.9}$ & \best{79.4}$_{3.2}$ & 79.4$_{3.2}$ & \best{98.2}$_{1.2}$ & \best{88.7}$_{2.0}$ & \best{90.5}$_{1.8}$ \\
     & X. N. 0 & \best{66.2}$_{14.1}$ & 48.8$_{20.1}$ & 64.7$_{12.9}$ & 84.0$_{5.3}$ & 76.3$_{4.2}$ & 76.4$_{4.2}$ & 98.1$_{1.4}$ & 87.9$_{3.4}$ & 90.1$_{2.9}$ \\
    \multirow{-5}{*}{\textbf{4}} & X. U. 0 & 61.1$_{17.1}$ & 51.2$_{17.8}$ & 63.6$_{12.8}$ & 86.4$_{4.6}$ & 79.1$_{2.7}$ & 79.1$_{2.6}$ & 98.1$_{1.3}$ & 88.2$_{2.9}$ & 90.2$_{2.6}$ \\

    \midrule
     & K. N. 0 & 93.8$_{1.7}$ & 88.9$_{4.6}$ & 89.6$_{4.5}$ & 95.4$_{2.2}$ & 86.9$_{2.8}$ & 87.0$_{2.8}$ & 98.6$_{0.4}$ & 88.2$_{2.2}$ & 89.6$_{2.0}$ \\
     & K. U. 0 & 92.2$_{2.2}$ & 87.5$_{1.9}$ & 88.4$_{2.0}$ & 94.6$_{2.3}$ & 86.3$_{2.2}$ & 86.4$_{2.1}$ & 98.6$_{0.4}$ & 88.8$_{2.3}$ & 90.9$_{2.0}$ \\
     \rowcolor{gray!20}
     & K. U. $\sqrt{5}$ & 92.4$_{3.2}$ & 86.7$_{3.9}$ & 87.8$_{3.4}$ & \best{96.1}$_{1.4}$ & 88.6$_{3.3}$ & 88.7$_{3.3}$ & 98.6$_{0.6}$ & \best{89.8}$_{3.1}$ & \best{91.6}$_{2.3}$ \\
     & X. N. 0 & \best{94.5}$_{1.5}$ & \best{90.1}$_{1.5}$ & \best{90.7}$_{1.6}$ & 95.6$_{2.0}$ & 88.5$_{1.4}$ & 88.5$_{1.4}$ & \best{98.7}$_{0.5}$ & 88.9$_{2.2}$ & 90.9$_{1.4}$ \\
    \multirow{-5}{*}{\textbf{16}} & X. U. 0 & 92.7$_{3.7}$ & 88.5$_{3.5}$ & 89.1$_{3.4}$ & 95.8$_{1.3}$ & \best{88.7}$_{3.1}$ & \best{88.8}$_{3.1}$ & \best{98.7}$_{0.4}$ & 89.5$_{3.1}$ & 91.2$_{2.5}$ \\
    \bottomrule
    \end{tabular}
    }
    \label{tab:ablation_initialization}
\end{table}

\subsection{Comparison with State-of-the-art Methods} 
\Cref{tab:main_exp,tab:main_exp_appendix,tab:main_exp_auprc,tab:main_exp_response} present few-shot results (typically with $k \in \{2,4,8,16\}$, where available) on multiple datasets, including tumor detection (Camelyon16), tumor subtyping (TCGA-NSCLC, TCGA-RCC), and treatment response (Boehmk, Trastuzumab; ABMIL variants only in \cref{tab:main_exp_response}). Three significant findings emerge from this analysis. First, across both artificially constructed few-shot datasets on large cohorts and inherently few-shot treatment response datasets, models augmented with the MR block consistently outperform their corresponding baselines over datasets and shot counts. On Camelyon16, TCGA-NSCLC, and TCGA-RCC, they match or surpass the state-of-the-art ViLaMIL~\citep{shi2024vila} and FOCUS~\citep{guo2025focus} while using far fewer trainable parameters. Second, our MR block improves parameter efficiency: replacing the vanilla linear layer with our MR block yields smaller models that deliver better performance across multiple MIL backbones, indicating that MR provides a beneficial low-rank inductive bias rather than simply adding capacity. Third, performance increases steadily as $k$ grows for all methods, with MR variants showing the largest gains at moderate shots and remaining competitive at higher shots, where RCC results approach a ceiling. We also observe reduced variability across runs for several MR settings, suggesting improved training stability. Overall, MR is a plug-in replacement for vanilla linear layers in MIL models that improves accuracy, data efficiency, and robustness without increasing model size.

\begin{table}
    \small
    \centering
    \caption{Experimental results on different pretrained feature extractors under various shot settings. ``R50'' stands for ResNet50, and ``MR-'' stands for MIL model with our MR block.}
    \resizebox{\textwidth}{!}{%
    \begin{tabular}{C{0.1cm}p{1.2cm}|C{1cm}C{1cm}C{1cm}C{1cm}C{1cm}C{1cm}|C{1cm}C{1cm}C{1cm}C{1cm}C{1cm}C{1cm}}
    \toprule
    \multirow{3}{*}{$k$} & \multirow{3}{*}{\bf Methods} & \multicolumn{6}{c|}{\textbf{ABMIL}} & \multicolumn{6}{c}{\textbf{CATE}} \\
    & & \multicolumn{3}{c}{\textbf{NSCLC}} & \multicolumn{3}{c|}{\textbf{RCC}} & \multicolumn{3}{c}{\textbf{NSCLC}} & \multicolumn{3}{c}{\textbf{RCC}} \\
    & & \textbf{AUC$\uparrow$} & \textbf{F1$\uparrow$} & \textbf{Acc.$\uparrow$} & \textbf{AUC$\uparrow$} & \textbf{F1$\uparrow$} & \textbf{Acc.$\uparrow$} & \textbf{AUC$\uparrow$} & \textbf{F1$\uparrow$} & \textbf{Acc.$\uparrow$} & \textbf{AUC$\uparrow$} & \textbf{F1$\uparrow$} & \textbf{Acc.$\uparrow$} \\

    \midrule
     & R50 & 59.0$_{3.7}$ & 54.7$_{8.1}$ & 57.0$_{4.6}$ & 74.6$_{6.9}$ & 56.6$_{9.0}$ & 58.5$_{10.7}$ & 64.9$_{7.8}$ & 57.4$_{10.6}$ & 60.0$_{6.9}$ & 82.6$_{11.0}$ & 64.2$_{10.2}$ & 68.9$_{7.0}$\\
     \rowcolor{gray!20} & MR-R50 & \sbest{64.3}$_{4.2}$ & \sbest{59.7}$_{3.3}$ & \sbest{60.1}$_{3.2}$ & \sbest{80.8}$_{4.5}$ & \sbest{62.3}$_{6.7}$ & \sbest{63.9}$_{8.3}$ & 63.5$_{6.4}$ & \sbest{60.7}$_{5.7}$ & \sbest{61.4}$_{5.0}$ & \sbest{89.5}$_{3.6}$ & \sbest{75.0}$_{7.2}$ & \sbest{76.6}$_{6.0}$\\
     & UNI & 80.8$_{7.9}$ & 72.8$_{6.2}$ & 73.1$_{6.2}$ & 98.6$_{0.4}$ & 88.7$_{2.1}$ & 90.3$_{1.8}$ & 82.7$_{7.8}$ & 72.4$_{5.1}$ & 73.0$_{5.1}$ & 98.1$_{0.6}$ & 87.8$_{3.4}$ & 89.3$_{3.0}$\\
     \rowcolor{gray!20} \multirow{-4}{*}{\textbf{8}} & MR-UNI & \best{91.3}$_{5.0}$ & \best{81.7}$_{4.4}$ & \best{81.8}$_{4.3}$ & \best{98.6}$_{0.6}$ & \best{91.2}$_{1.8}$ & \best{92.3}$_{1.4}$ & \best{88.8}$_{6.6}$ & \best{80.6}$_{6.8}$ & \best{80.8}$_{6.7}$ & \best{98.5}$_{0.4}$ & \best{90.8}$_{1.4}$ & \best{91.8}$_{1.1}$\\

    \midrule
     & R50 & 65.6$_{9.9}$ & 62.3$_{9.0}$ & 62.5$_{8.8}$ & 84.0$_{4.0}$ & 67.7$_{3.8}$ & 70.9$_{5.1}$ &  66.3$_{10.1}$ & 56.2$_{12.8}$ & 61.1$_{7.4}$ & 90.0$_{8.0}$ & 74.7$_{13.2}$ & 78.0$_{9.0}$\\
     \rowcolor{gray!20} & MR-R50 & \sbest{67.2}$_{6.6}$ & \sbest{63.7}$_{5.0}$ & \sbest{63.9}$_{4.8}$ & \sbest{85.9}$_{2.1}$ & \sbest{70.8}$_{2.4}$ & \sbest{73.4}$_{1.8}$ & \sbest{66.3}$_{12.6}$ & \sbest{59.2}$_{15.1}$ & \sbest{63.0}$_{8.2}$ & \sbest{94.1}$_{4.2}$ & \sbest{81.4}$_{5.1}$ & \sbest{82.4}$_{5.6}$\\
     & UNI & 91.1$_{2.7}$ & 83.6$_{3.0}$ & 83.6$_{3.0}$ & 98.9$_{0.3}$ & 89.2$_{1.9}$ & 91.0$_{1.8}$ & 87.2$_{7.5}$ & 75.8$_{9.2}$ & 76.9$_{7.7}$ & 98.0$_{0.9}$ & 88.0$_{2.5}$ & 90.3$_{1.6}$ \\
     \rowcolor{gray!20} \multirow{-4}{*}{\textbf{16}} & MR-UNI & \best{94.8}$_{2.8}$ & \best{87.0}$_{3.6}$ & \best{87.0}$_{3.6}$ & \best{98.9}$_{0.4}$ & \best{91.1}$_{2.4}$ & \best{92.4}$_{1.8}$ & \best{91.7}$_{4.6}$ & \best{82.6}$_{6.0}$ & \best{82.9}$_{5.6}$ & \best{98.5}$_{0.6}$ & \best{91.0}$_{1.9}$ & \best{92.2}$_{1.5}$\\
    \bottomrule
    \end{tabular}
    }
    \label{tab:ablation_encoder}
\end{table}

\subsection{Ablation Studies and Sensitivity Analysis}

\textbf{Ablation Study of MR Block Components.} Our ablation studies in \cref{tab:ablation} validate our decoupled design. Removing the LRP term ($f_{-\text{LRP}} = \bd{X}\bd{B}$) consistently degrades performance, confirming its necessity for task adaptation. Removing the geometric anchor, either while retaining the residual connection ($f_{-\bd{B}} = \operatorname{GELU}(\bd{X}\bd{W}_2)\bd{W}_1 + \bd{X}\bd{I} = \operatorname{GELU}(\bd{X}\bd{W}_2)\bd{W}_1 + \bd{X}$) or while removing it ($f_{-\bd{BX}} = \operatorname{GELU}(\bd{X}\bd{W}_2)\bd{W}_1$), is detrimental in both cases, underscoring its dual role as a geometric anchor and a spectral shaper. More importantly, making the anchor trainable leads to a catastrophic performance collapse. These observations provide direct empirical support for our core hypothesis: an unconstrained linear layer tends to shatter the feature manifold, whereas our design preserves it.

\textbf{Capacity-Matched Analysis of MR Block Effects.} To disentangle parameter reduction from the geometric inductive bias introduced by our MR block, we construct a capacity-matched MR-ABMIL in which the MR block has exactly the same number of trainable parameters as the original gated attention layer in ABMIL; the
results are reported in \cref{tab:ablation_equal_capacity}. For a gated attention layer with input dimension $d_0 = 512$ and output dimension $d_1 = 256$, we set the rank to $r = (d_0 d_1)/(d_0 + d_1) = 170$, so that MR and the vanilla linear layer are exactly matched in parameter count. Across Camelyon16 and RCC, this capacity-matched MR-ABMIL consistently achieves substantial improvements in all metrics over the ABMIL baseline for all shot settings. On NSCLC, performance remains comparable, with a slight drop at 4-shot and similar or better performance at 8- and 16-shot. Since these gains are obtained under exactly matched trainable parameter counts, they provide direct evidence that the geometry-aware design of MR as a structural inductive bias, rather than parameter reduction alone, plays a central role in improving few-shot performance.

\textbf{Layer-wise Performance Analysis.} We conducted a layer-wise ablation study to evaluate how our method affects performance when implemented at different layers within MIL. We selected ABMIL~\citep{ilse2018attention} due to its structural simplicity, with results reported in \cref{tab:ablation_position,tab:ablation_position_appendix}. This study investigates two components of ABMIL~\citep{ilse2018attention}, $\bd{U}$ and $\bd{V}$ (detailed in \cref{sec:replacement_appendix}), analyzing performances when selectively replacing these components with the MR block. The results demonstrate that optimal performance is achieved when both components are replaced with our MR block, and our method is robust to positions in MIL.

\textbf{Choice of Initialization.} We investigated the sensitivity of the MR block to the initialization scheme of $\bd{B}$, by testing four schemes: Kaiming/Xavier Normal/Uniform. The results in \cref{tab:ablation_initialization,tab:ablation_initialization_appendix} demonstrate that our method is broadly insensitive to the specific choice of initialization. Across most settings, the performance differences between these methods are marginal, confirming that the benefits of the geometric anchor are not sensitive to initialization. Our chosen default, Kaiming Uniform with $\sqrt{5}$, consistently delivers superior performance, validating it as an effective choice without the need for additional hyper-parameter tuning.  

\begin{figure}[t!]
    \centering
    \begin{subfigure}[b]{0.48\textwidth}
        \centering
        \includegraphics[width=\textwidth]{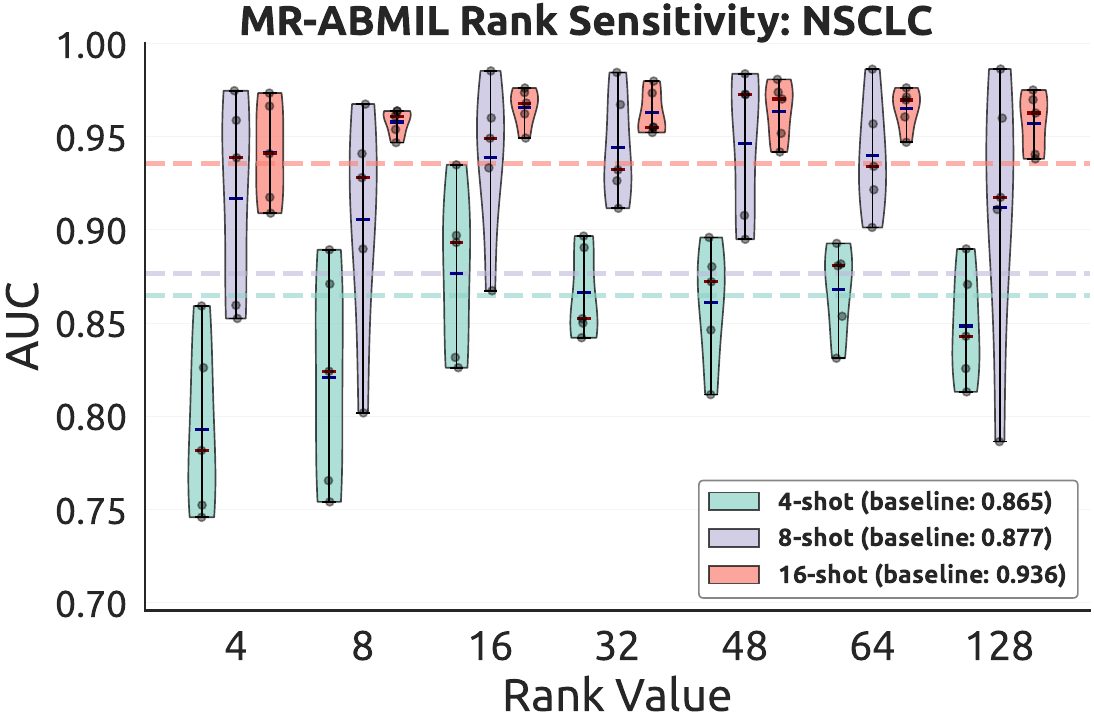}
    \end{subfigure}
    \hfill
    \begin{subfigure}[b]{0.48\textwidth}
        \centering
        \includegraphics[width=\textwidth]{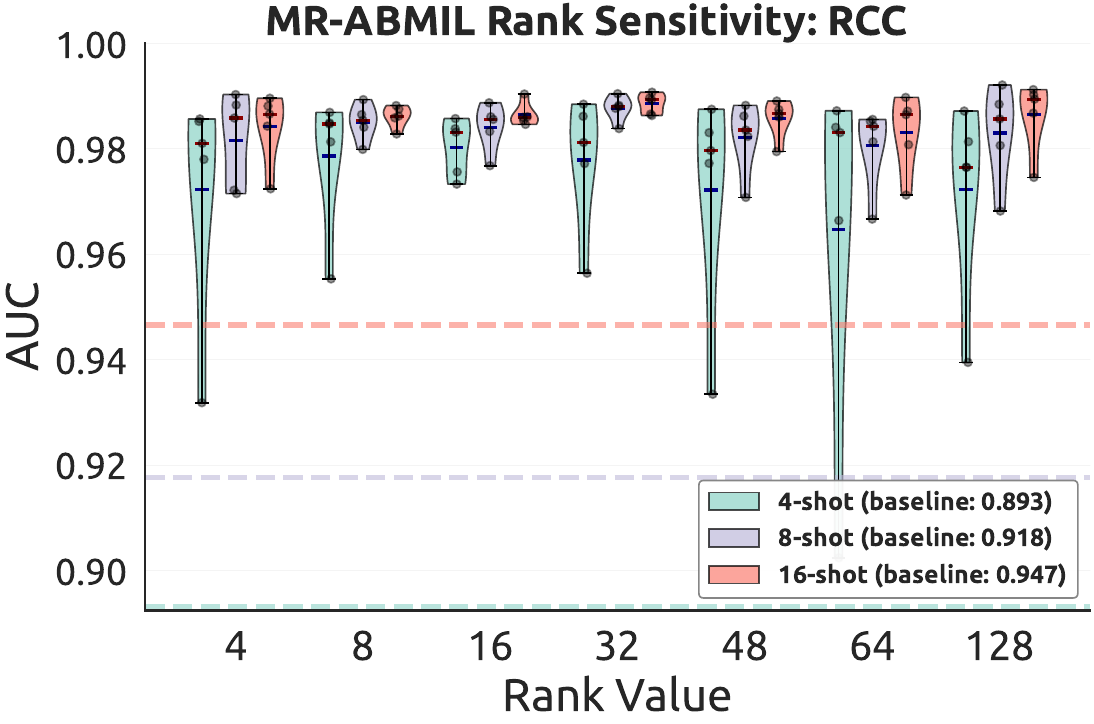}
    \end{subfigure}
    
    \vspace{1em}
    
    \begin{subfigure}[b]{0.48\textwidth}
        \centering
        \includegraphics[width=\textwidth]{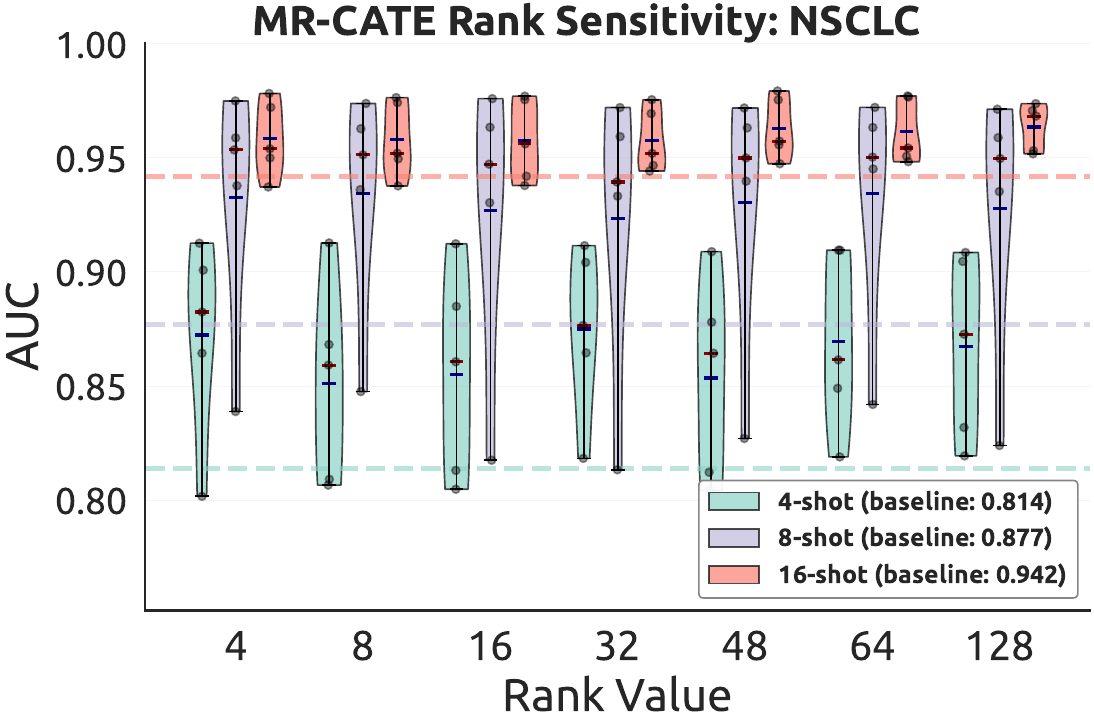}
    \end{subfigure}
    \hfill
    \begin{subfigure}[b]{0.48\textwidth}
        \centering
        \includegraphics[width=\textwidth]{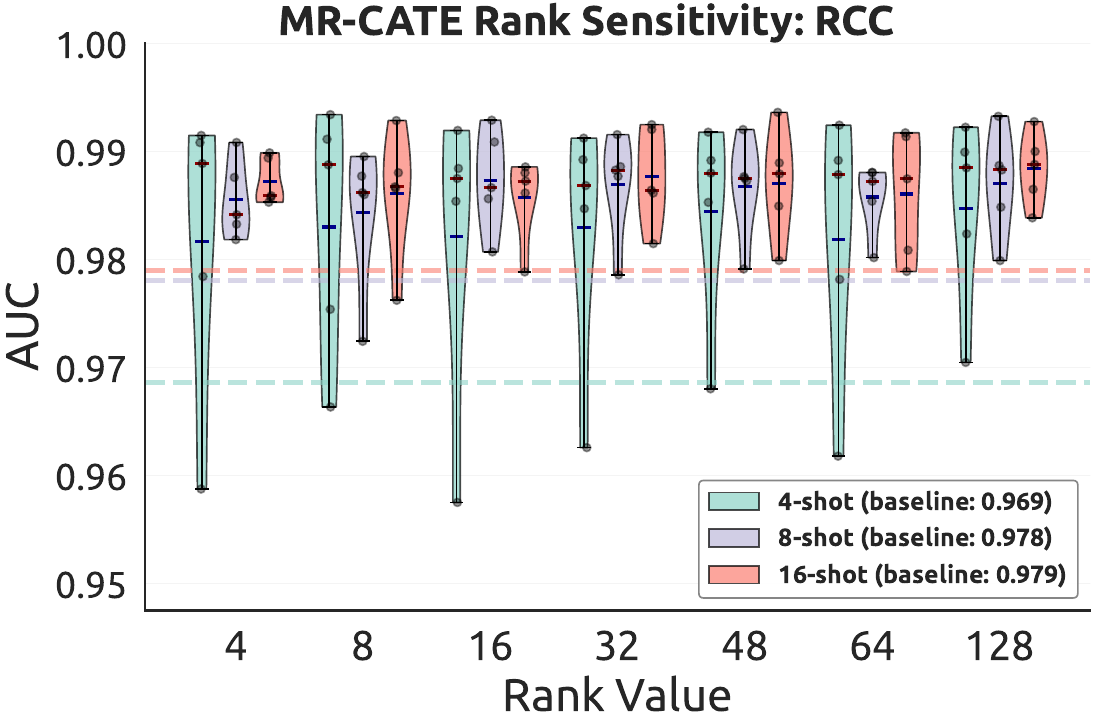}
    \end{subfigure}
    \caption{Sensitivity analysis of parameter $r$ across multiple datasets using MR-ABMIL and MR-CATE. Dashed lines indicate the performance of the baseline methods (\ie, ABMIL and CATE).}
    \label{fig:ablation_r}
\end{figure}

\textbf{Foundation Model Selection.} We investigated whether our approach demonstrates compatibility across different foundation model feature extractors. Using CONCH, UNI~\citep{chen2024uni}, and ResNet50~\citep{he2016deep} with ABMIL and CATE, we present results in \cref{tab:ablation_encoder,tab:ablation_encoder_appendix}. Our method demonstrated consistent improvements over baselines across all model combinations, substantiating its robustness across diverse MIL and foundation models. CONCH embeddings consistently outperformed UNI and ResNet50, aligning with expectations as ResNet50 lacks pathology pretraining. The dimensional difference between UNI (1,024) and CONCH (512) likely contributes to lower performance of UNI due to inflated higher-dimensional spaces under few-shot settings.

\textbf{Sensitivity Analysis of the Rank.} The residual rank $r$ is a key design hyperparameter that directly controls how much information can flow through the low-rank path. We perform a sensitivity analysis over $r$, and the results are shown in \cref{fig:ablation_r}. Across these datasets and shot settings, model performance saturates around $r = 32$. On the datasets and backbones evaluated, this empirically observed saturation point aligns closely with the theoretically predicted effective rank of the features, indicating that a low-rank path is sufficient to capture the essential task-specific information residing in the manifold structure. The simpler MR-ABMIL model peaks sharply at $r = 32$, while the more expressive MR-CATE exhibits only marginal gains at higher ranks, which we attribute to its additional capacity for modeling finer-grained feature interactions beyond the primary manifold structure.

\textbf{Analysis Beyond the Extreme Few-Shot Regime.} To assess whether MR continues to provide benefits when data volume increases, we conduct experiments on TCGA-NSCLC and TCGA-RCC with larger shots, $k \in \{32, 64\}$, and report the results in \cref{tab:main_exp_more_shots_appendix}. Across both datasets and both shot settings, MR-ABMIL and MR-CATE consistently outperform their corresponding vanilla baselines on all metrics. These observations align with our interpretation of MR as a geometry-aware regularizer: the relative gains are typically stronger in the extreme few-shot regime, tend to decrease as supervision increases, yet remain positive and do not degrade performance even when more data is available.

\subsection{Model Interpretability Analysis under Extreme Limited Resource}

\Cref{fig:heatmap} illustrates the heatmaps generated by our MR-CATE, which demonstrates remarkable robustness in the extreme \textit{2-shot setting} where the standard model (CATE) often fails to produce meaningful heatmaps. In the original images (upper left), blue curves delineate the approximate tumor boundaries, and the corresponding heatmap is shown in the lower left panel. Notably, our method captures additional fine-grained boundaries not presented in the original annotations, as shown in the right panel. Although all regions on the right fall within the blue-delineated tumor boundaries, our model precisely identifies the boundaries between distinct morphological patterns, demonstrating its sensitivity in capturing tumor and normal tissue heterogeneity.

\section{Conclusion}
In this work, we introduced a new geometric perspective on the problem of overfitting in few-shot WSI classification. We provided both quantitative and visual evidence that features from pathology foundation models exhibit a fragile, low-dimensional manifold geometry, and identified a common model failure mode of the current MIL models: the systematic degradation of this manifold structure by the geometry-agnostic linear layer, which is the indispensable component of modern MIL models. We proposed the Manifold Residual block, a plug-and-play module that preserves this geometry via a fixed random geometric anchor while enabling parameter-efficient, task-specific adaptation through a low-rank residual path. Our extensive experimental results not only achieve state-of-the-art results but also empirically support our geometric diagnosis, thereby establishing a new, geometry-aware paradigm for developing robust models with implications beyond computational pathology.

\section*{Reproducibility Statement}
Code is available in \url{https://github.com/BearCleverProud/MR-Block}.

In addition, we point to all components needed to replicate our results. The MIL setup and geometry tools are in \cref{sec:preliminaries,sec:spectral_analysis,sec:tangent_analysis}. The MR architecture is in \cref{sec:architecture}; full experimental protocols (datasets, few-shot $k$, backbones, metrics) and implementation details are in \cref{sec:experiment_setups}. Ablations and sensitivity studies (component/placement/initialization/rank) appear in \Cref{tab:ablation,tab:ablation_position,tab:ablation_initialization,fig:ablation_r} and their appendix counterparts. Assumptions and complete proofs are in \cref{sec:proof_of_rand_proj_appendix,sec:proof_of_approximation}.

\section*{Acknowledgements}
The research presented in this paper was partially supported by the Research Grants Council of the Hong Kong Special Administrative Region, China (CUHK 2410072, RGC R1015-23) and (CUHK 2300246, RGC C1043-24G).

\bibliography{iclr2026_conference}
\bibliographystyle{iclr2026_conference}
\newpage

\clearpage
\appendix

\makeatletter
\let\oldsection\section
\renewcommand{\section}[1]{%
  \oldsection{#1}%
  \addcontentsline{apx}{appendixsection}{\protect\numberline{\thesection}#1}%
}
\makeatother
\begingroup
  \let\clearpage\relax
  \listofappendixsection
\endgroup

\clearpage

\section{LLM Usage}
The authors utilized LLMs as a writing assistant during the preparation of this manuscript. Its role was primarily focused on tasks related to language refinement and improving the clarity of the narrative. Its application included: (1) enhancing the conciseness and academic tone of sentences and paragraphs; (2) restructuring complex sentences for better readability; and (3) brainstorming alternative phrasings for key scientific arguments.

It is important to emphasize that all core scientific ideas, experimental design, results, and conclusions were exclusively developed by the human authors. The LLM served as a sophisticated editing and brainstorming tool. All text generated or modified by the LLM was critically reviewed, edited, and ultimately approved by the authors to ensure it accurately reflected our original intent and scientific findings. 

\section{Spectral Analysis Results}\label{sec:spectral_results}
\begin{figure}
    \centering
    \begin{subfigure}[b]{0.32\textwidth}
        \centering
        \includegraphics[width=\textwidth]{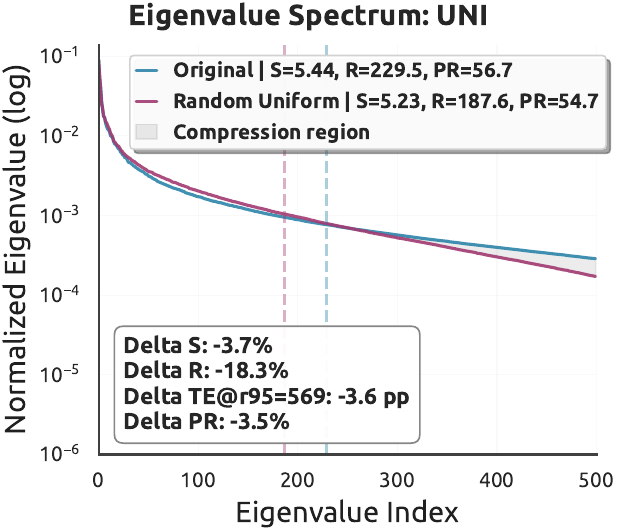}
    \end{subfigure}
    \hfill
    \begin{subfigure}[b]{0.32\textwidth}
        \centering
        \includegraphics[width=\textwidth]{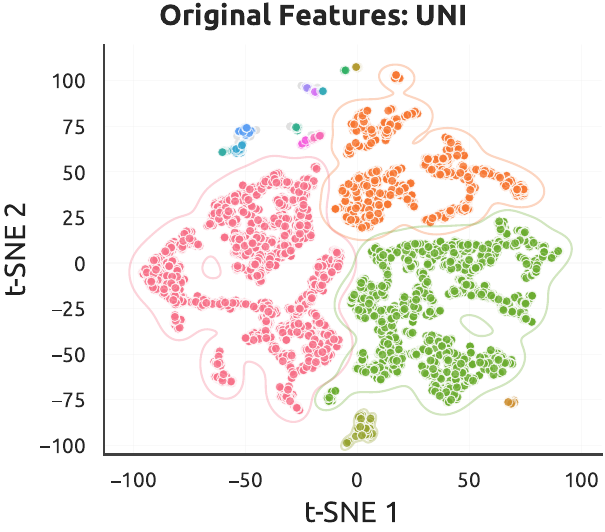}
    \end{subfigure}
    \hfill
    \begin{subfigure}[b]{0.32\textwidth}
        \centering
        \includegraphics[width=\textwidth]{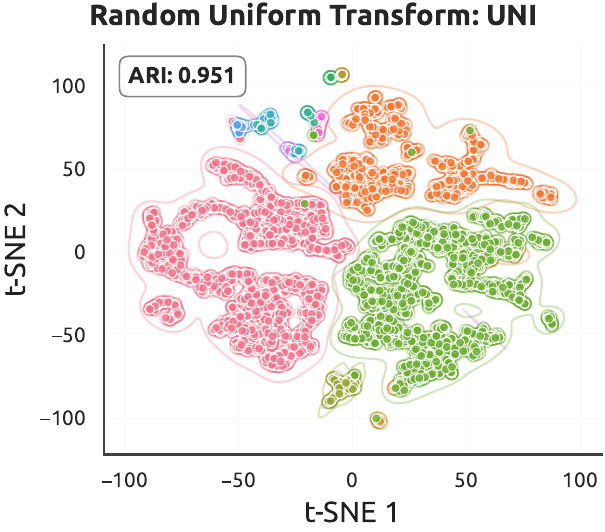}
    \end{subfigure}
    \hfill
    \hfill
    \caption{Random transformation preserves the intrinsic low-dimensional manifold of UNI features. (a) Spectral analysis confirms the low-rank structure. (b) t-SNE reveals the cluster topology of the manifold. (c) High ARI and consistent coloring visually and quantitatively prove this topology is robustly preserved.}
    \label{fig:uni_camelyon}
\end{figure}
\begin{figure}
    \centering
    \begin{subfigure}[b]{0.32\textwidth}
        \centering
        \includegraphics[width=\textwidth]{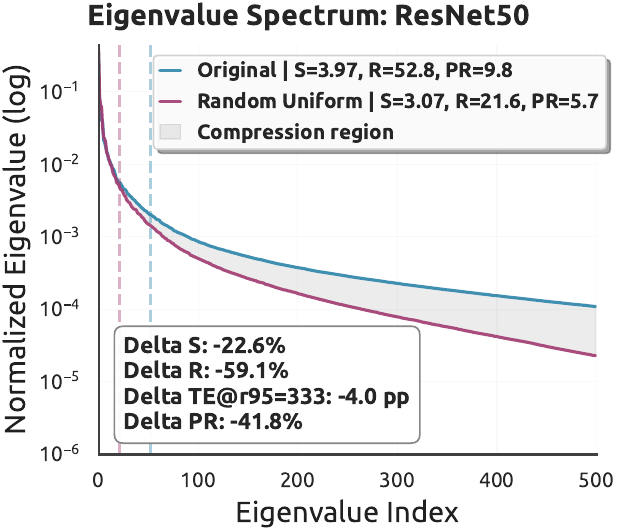}
    \end{subfigure}
    \hfill
    \begin{subfigure}[b]{0.32\textwidth}
        \centering
        \includegraphics[width=\textwidth]{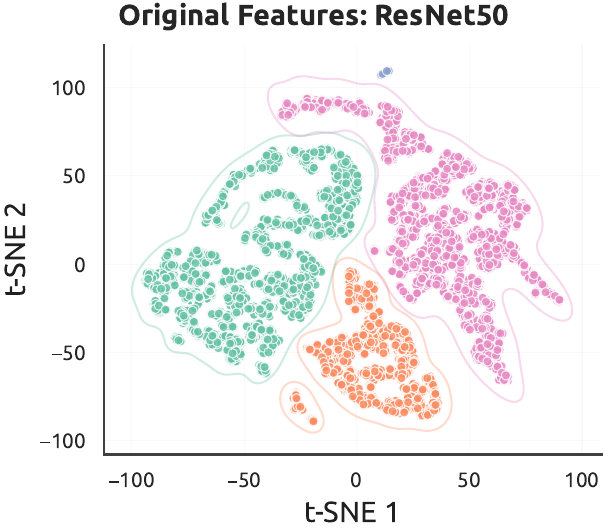}
    \end{subfigure}
    \hfill
    \begin{subfigure}[b]{0.32\textwidth}
        \centering
        \includegraphics[width=\textwidth]{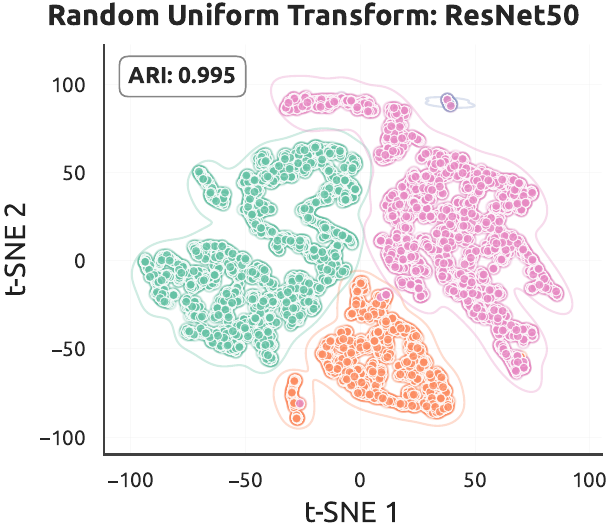}
    \end{subfigure}
    \hfill
    \hfill
    \caption{Random transformation preserves the intrinsic low-dimensional manifold of ResNet50 features. (a) Spectral analysis confirms the low-rank structure. (b) t-SNE reveals the cluster topology of the manifold. (c) High ARI and consistent coloring visually and quantitatively prove this topology is robustly preserved.}
    \label{fig:resnet_camelyon}
\end{figure}

We apply the same procedures to UNI~\citep{chen2024uni} and ResNet50~\citep{he2016deep} features of Camelyon16~\citep{litjens_1399_2018} dataset, and present the results in \cref{fig:uni_camelyon,fig:resnet_camelyon}. From these figures, we can see that features from these two feature extractors share similar characteristics as those from CONCH~\citep{lu2024avisionlanguage}: rapid decaying spectrum and low effective rank. In addition, after applying a random uniform transformation on the features, the high ARIs and the cluster visualizations demonstrate that the random transformation can preserve the manifold structure well, both visually and quantitatively.

Interestingly, our analysis reveals a significant variance in the degree of this compression across models, which reflects their distinct, training-induced information encoding strategies. At one extreme, ImageNet-pretrained ResNet50 exhibits a dramatic rank collapse (from 52.8 to 21.6). We argue this is not a sign of superior and compact structure discovery, but a domain-induced information collapse. This aligns with foundational studies on transfer learning, which show that ImageNet priors are often suboptimal for specialized domains like pathology~\citep{raghu2019transfusion,zoph2020rethinking}, leading to a reliance on superficial ``shortcut'' features rather than genuine histological patterns~\citep{geirhos2020shortcut}. At the other extreme, CONCH, trained with text-alignment, is incentivized to learn a highly abstract, semantic-driven representation that captures key diagnostic concepts~\citep{goh2021multimodal}, a behavior consistent with findings in the vision-language models~\citep{liang2022mind}. This results in a highly compressed, specialized feature manifold. In contrast, UNI trained via self-supervision, is driven to capture a broader spectrum of morphological diversity. Its objective of instance discrimination forces the model to learn features sensitive to fine-grained visual details~\citep{chen2020simple,caron2021emerging}, resulting in a richer, higher-dimensional (yet still low-rank) feature manifold. Crucially, despite these differences, the features from both expert PFMs (CONCH and UNI) provide a far richer and more reliable structural prior than the collapsed representation of the generalist model.

\section{Complexity Analysis}\label{sec:time_complexity}

\subsection{Space Complexity Analysis} In the linear layer, the matrix $\bd{A}_{*}$ contains $d_0d_1$ trainable parameters; however, in our approach, as $\bd{B}$ is frozen, the only trainable part is LRP, with $(d_0+d_1) r$ trainable parameters. Defining the reduction ratio $r_r = (d_0+d_1) r / (d_0d_1)$, and enforcing $r_r < 1$, yields the constraint $r < (d_0d_1)/(d_0+d_1)$. If $r_r \ge 1$, the trainable parameter count in the MR block would match or exceed $d_0d_1$, the trainable parameter count in $\bd{A}_{*}$, which makes our MR block meaningless.

\subsection{Time Complexity Analysis}

We begin by analyzing a general case time complexity and then use the result to analyze our method.

In this section, we derive the computational cost of both the forward and backward passes through a single linear layer. We show that each pass scales linearly with the total number of weights $d_0d_1$.

For convenience of the following derivations, we define the notations as follows, 
\begin{align}
    M &= [M_{ij}]\;\in\;\mathbb{R}^{d_0\times d_1}, 
  \\
  x &= [x_j]\;\in\;\mathbb{R}^{d_0},
  \\
  z = Mx &= [z_i]\;\in\;\mathbb{R}^{d_1},
  \\
  \delta = \nabla_z L &= [\delta_i]\;\in\;\mathbb{R}^{d_1}.
\end{align}

\subsection{Forward Pass}
To compute the pre-activation vector $z$, each component $z_i$ performs a dot product between the $i$th row of $M$ and the input $x$, the $z_i$ is given by,
\begin{equation}
    z_i \;=\; \sum_{j=1}^{d_0} M_{ij}\,x_j.
\end{equation}
Since there are $d_1$ such outputs and each requires $d_0$ multiplications (and roughly the same number of additions), the total cost of the forward pass is $\bigoh(d_0d_1)$.

\subsection{Backward Pass}
The backward pass consists of three main steps: computing the gradient with respect to the weights, propagating the gradient to the inputs, and updating the weights.

\paragraph{Weight Gradients} The derivative of the loss $L$ with respect to each weight $M_{ij}$ follows by the chain rule, is given by,
\begin{equation}
  \frac{\partial L}{\partial M_{ij}}
  = \frac{\partial L}{\partial z_i}\,\frac{\partial z_i}{\partial M_{ij}}
  = \delta_i \cdot x_j.
\end{equation}
Collecting these into the full gradient matrix yields the outer product,
\begin{equation}
  \nabla_M L 
  = \delta\,x^\top,
\end{equation}
which requires computing one scalar product for each of the $d_0d_1$ entries.  Hence, the cost of this step is $\bigoh(d_0d_1)$.

\paragraph{Input Gradients}
To propagate the error back into the input space, we compute,
\begin{equation}
  \nabla_x L 
  = M^\top\,\delta.
\end{equation}
This is a matrix–vector product of dimensions $d_0 \times d_1$, again costing $\bigoh(d_0d_1)$.

\paragraph{Weight Update}
A typical gradient descent update modifies each entry of $M$ by
\begin{equation}
  M_{ij} \;\leftarrow\; M_{ij} \;-\;\eta\,\frac{\partial L}{\partial M_{ij}}.
\end{equation}
Updating all $d_0d_1$ weights elementwise thus costs
\begin{equation}
  \mathcal{O}(d_0\,d_1).
\end{equation}

\subsection{Total Time Complexity}
Summing the costs of the forward pass and all backward-pass components, we obtain $\bigoh(d_0d_1)$ complexity. Therefore, a single backpropagation step for one example in this layer has time complexity linear in the number of parameters $d_0d_1$.

\section{Proof of Random Recalibration Matrix Preserving Input Characteristics}\label{sec:proof_of_rand_proj_appendix}
We categorize the geometric characteristics of the input features into variance and covariance; inner products and norms; cosine similarity; pairwise distances; condition numbers; the restricted isometry property; subspace embeddings; manifold geometry; cluster structure; nearest‐neighbor relationships; and simplex volumes. We then demonstrate that each of these properties satisfies precise invariance conditions, ensuring that the recalibration matrix preserves the underlying geometric structure.

In this proof, we take the Kaiming uniform initialization with $a=\sqrt{5}$ as an example, which is the default initialization method for linear layers.

\subsection{Variance and Covariance Preservation}
Intuitively, this result says that when you multiply your data covariance by a ``random'' matrix $M$, all of the original variance gets spread out evenly across the new $d_{1}$ dimensions, up to the constant factor $d_{1}/(3\,d_{0})$. In other words, no particular direction in the original space is favored or suppressed on average.

To derive the scaling of total variance under projection, we start from
\begin{equation}
\mathrm{tr}(M^{\top}\,\Sigma\,M)
= \sum_{p=1}^{d_{1}} \bigl(M^{\top}\,\Sigma\,M\bigr)_{pp}
= \sum_{p=1}^{d_{1}} \sum_{q=1}^{d_{0}} \sum_{r=1}^{d_{0}}
M_{q,p}\;\Sigma_{q,r}\;M_{r,p}.
\end{equation}
Taking expectation and using linearity gives
\begin{equation}
E\!\bigl[\mathrm{tr}(M^{\top}\,\Sigma\,M)\bigr]
= \sum_{p=1}^{d_{1}} \sum_{q=1}^{d_{0}} \sum_{r=1}^{d_{0}}
\Sigma_{q,r}\;E\bigl[M_{q,p}\,M_{r,p}\bigr].
\end{equation}
Because the entries $M_{q,p}$ are independent with $E[M_{q,p}]=0$, only terms with $q=r$ survive:
\begin{equation}
E\bigl[M_{q,p}\,M_{r,p}\bigr]
= 
\begin{cases}
\mathrm{Var}(M_{q,p}) = \tfrac{1}{3\,d_{0}}, & q=r,\\
0, & q\neq r.
\end{cases}
\end{equation}
Substituting back, we have
\begin{equation}
E\!\bigl[\mathrm{tr}(M^{\top}\,\Sigma\,M)\bigr]
= \sum_{p=1}^{d_{1}} \sum_{q=1}^{d_{0}}
\Sigma_{q,q}\;\frac{1}{3\,d_{0}}
= \frac{d_{1}}{3\,d_{0}}\;\sum_{q=1}^{d_{0}}\Sigma_{q,q}
= \frac{d_{1}}{3\,d_{0}}\;\mathrm{tr}(\Sigma).
\end{equation}

\subsection{Inner Products and Norms}
A random projection with independent, zero‐mean entries treats every coordinate of your vectors in an unbiased, ``democratic'' way. On average, the dot product between any two vectors u and v simply gets scaled by the factor $d_{1}/(3\,d_{0})$, but its sign and relative magnitude remain the same. Geometrically, this means angles and lengths are preserved in expectation, no particular direction in the original space is preferred, and so the overall similarity structure of the data survives the embedding up to a known scale.

To see how pairwise inner products scale, write
\begin{equation}
\langle M\,u,\;M\,v\rangle
= \sum_{p=1}^{d_{1}} (M\,u)_{p}\;(M\,v)_{p}
= \sum_{p=1}^{d_{1}} \sum_{i=1}^{d_{0}} \sum_{j=1}^{d_{0}}
M_{p,i}\,u_{i}\;M_{p,j}\,v_{j}.
\end{equation}
Taking expectation:
\begin{equation}
E\bigl[\langle M\,u,\;M\,v\rangle\bigr]
= \sum_{p,i,j} u_{i}\,v_{j}\;E\bigl[M_{p,i}\,M_{p,j}\bigr].
\end{equation}
By independence and zero mean, only $i=j$ terms contribute:
\begin{equation}
E\bigl[M_{p,i}\,M_{p,j}\bigr]
=
\begin{cases}
\tfrac{1}{3\,d_{0}}, & i=j,\\
0, & i\neq j,
\end{cases}
\end{equation}
so
\begin{equation}
E\bigl[\langle M\,u,\;M\,v\rangle\bigr]
= \sum_{p=1}^{d_{1}} \sum_{i=1}^{d_{0}}
u_{i}\,v_{i}\;\frac{1}{3\,d_{0}}
= \frac{d_{1}}{3\,d_{0}}\;\langle u,v\rangle.
\end{equation}
Similarly, the norm squared follows as the special case $u=v$,
\begin{equation}
E\bigl[\lVert M\,u\rVert^{2}\bigr]
= E\bigl[\langle M\,u,\;M\,u\rangle\bigr]
= \frac{d_{1}}{3\,d_{0}}\;\lVert u\rVert^{2}.
\end{equation}

\subsection{Cosine Similarity}
Cosine similarity measures only the angle between two vectors, not their lengths. Since the random projection scales all lengths and dot products by the same constant in expectation, it does not distort angles on average. In effect, no particular direction is stretched more than any other, so the typical cosine similarity between any two points remains exactly as it was before projection.

Since both inner products and norms incur the same factor $\tfrac{d_{1}}{3\,d_{0}}$ in expectation, the mean cosine similarity is exactly preserved:
\begin{equation}
E[\cos\theta']
= \frac{E[\langle M\,u,\,M\,v\rangle]}{\sqrt{E[\lVert M\,u\rVert^{2}]\,E[\lVert M\,v\rVert^{2}]}}
= \frac{\tfrac{d_{1}}{3\,d_{0}}\langle u,v\rangle}{\sqrt{\tfrac{d_{1}}{3\,d_{0}}\lVert u\rVert^{2}\;\tfrac{d_{1}}{3\,d_{0}}\lVert v\rVert^{2}}}
= \cos\theta.
\end{equation}

\subsection{Pairwise Distances}
This property guarantees that a random projection will, with overwhelming likelihood, maintain the original distances between every pair of data points up to a small, controllable error. In effect, the projection spreads each original distance across many independent components, and by concentration of measure those components collectively reproduce the true separation very closely. As a result, one can embed high-dimensional data into a much lower-dimensional space without significantly distorting the geometric relationships, ensuring that algorithms relying on interpoint distances, such as clustering or nearest-neighbor retrieval, continue to perform reliably.

Let $v = x_i - x_j$.  Then
\begin{equation}
\|M\,v\|^2
= \sum_{p=1}^{d_1}\Bigl(\sum_{q=1}^{d_0}M_{p,q}\,v_q\Bigr)^2.
\end{equation}
Define for each $p$,
\begin{equation}
Y_p = \sum_{q=1}^{d_0} M_{p,q}\,v_q,
\quad
E[Y_p]=0,
\quad
\mathrm{Var}(Y_p)=\frac{\|v\|^2}{3\,d_0}.
\end{equation}
By concentration of $\sum_p Y_p^2$, with probability $\ge1-\delta$,
\begin{equation}
(1-\varepsilon)\,\|v\|^2
\le
\|M\,v\|^2
\le
(1+\varepsilon)\,\|v\|^2,
\end{equation}
provided
\begin{equation}
d_1 \ge C\,\varepsilon^{-2}\,\ln\!\bigl(N/\delta\bigr).
\end{equation}

\subsection{Condition Number}
This property ensures that a random projection will not substantially worsen the numerical stability of the data. In other words, the projection maintains the balance between the directions in which the data stretches the most and the least. As a result, any algorithms that depend on solving linear systems or performing matrix decompositions will continue to behave reliably after projection, because the projected matrix remains nearly as well‐conditioned as the original.

For data matrix $X$ and any unit $u$,
\begin{equation}
\|X\,M\,u\|^2
= u^{\top}M^{\top}X^{\top}X\,M\,u.
\end{equation}
Applying matrix concentration on $X M$ shows that if
\begin{equation}
d_1 \ge C\,\varepsilon^{-2}\,d_0,
\end{equation}
then with high probability
\begin{equation}
(1-\varepsilon)\,u^{\top}X^{\top}X\,u
\le
u^{\top}M^{\top}X^{\top}X\,M\,u
\le
(1+\varepsilon)\,u^{\top}X^{\top}X\,u,
\end{equation}
so
\begin{equation}
\kappa(XM)\le \frac{1+\varepsilon}{1-\varepsilon}\,\kappa(X)\approx(1+\varepsilon)\,\kappa(X).
\end{equation}

\subsection{Restricted Isometry Property}
This property ensures that a random projection acts almost like a perfect length‐preserver on all sparse vectors at once. In practice, it means that if the data has an underlying sparse structure, we can compress it drastically without losing the ability to tell sparse signals apart or to reconstruct them reliably. By making the projection dimension grow in line with how many nonzeros we expect, we guard against any sparse pattern being overly distorted.

For any $K$-sparse $x$,
\begin{equation}
\|M\,x\|^2
= \sum_{p=1}^{d_1}\Bigl(\sum_{q\in\supp(x)}M_{p,q}x_q\Bigr)^2.
\end{equation}
A union bound over all supports yields that if
\begin{equation}
d_1 \ge C\,\delta^{-2}\,K\,\ln\!\frac{d_0}{K},
\end{equation}
then for all $K$-sparse $x$,
\begin{equation}
(1-\delta)\,\|x\|^2
\le
\|M\,x\|^2
\le
(1+\delta)\,\|x\|^2.
\end{equation}

\begin{table}[t!]
    \small
    \caption{Experimental results comparing our method with baselines are presented under various shot settings. Performance metrics in which our method outperforms the baseline are highlighted in italic blue and the best metrics are in bold red. ``\#P'' is the number of parameters (million).}
    \centering
    \resizebox{\textwidth}{!}{%
    \begin{tabular}{C{0.1cm}p{1.95cm}C{0.4cm}|C{1cm}C{1cm}C{1cm}C{1cm}C{1cm}C{1cm}C{1cm}C{1cm}C{1cm}}
    \toprule
    \bf \multirow{2}{*}{$k$} & \bf \multirow{2}{*}{Methods} & \bf \multirow{2}{*}{\#P} & \multicolumn{3}{c}{\textbf{Camelyon16}} & \multicolumn{3}{c}{\textbf{NSCLC}} & \multicolumn{3}{c}{\textbf{RCC}} \\
    & &  & \textbf{AUC$\uparrow$} & \textbf{F1$\uparrow$} & \textbf{Acc.$\uparrow$} & \textbf{AUC$\uparrow$} & \textbf{F1$\uparrow$} & \textbf{Acc.$\uparrow$} & \textbf{AUC$\uparrow$} & \textbf{F1$\uparrow$} & \textbf{Acc.$\uparrow$} \\

    \midrule
    \rowcolor{gray!10} \cellcolor{white}
    & ViLaMIL & 40.7 & 53.8$_{12.7}$ & 45.8$_{11.5}$ & 61.1$_{7.8}$ & 77.6$_{5.0}$ & 69.4$_{2.8}$ & 69.7$_{2.8}$ & 90.3$_{4.3}$ & 74.5$_{6.8}$ & 79.7$_{5.7}$ \\
    & ABMIL & 0.26 & 41.2$_{7.6}$ & 43.1$_{6.5}$ & 46.5$_{6.2}$ & 77.4$_{14.1}$ & 68.4$_{9.8}$ & 69.1$_{9.4}$ & 84.4$_{7.3}$ & 63.2$_{8.1}$ & 64.7$_{9.1}$\\
    \rowcolor{gray!20} \cellcolor{white}
     & \textbf{MR}-ABMIL & 0.10 & \sbest{45.0}$_{11.6}$ & \sbest{43.2}$_{8.6}$ & \sbest{57.1}$_{6.2}$ & \best{85.5}$_{6.5}$ & \best{75.9}$_{7.2}$ & 
     \best{76.3}$_{7.0}$ & \sbest{92.6}$_{6.3}$ & \sbest{78.1}$_{7.5}$ & \sbest{81.8}$_{7.2}$ \\
    & TransMIL & 2.41 & 47.5$_{9.4}$ & 45.0$_{4.9}$ & 62.3$_{1.7}$ & 74.5$_{3.7}$ & 65.7$_{6.0}$ & 66.6$_{4.8}$ & 92.5$_{4.2}$ & 75.1$_{9.8}$ & 79.6$_{7.4}$\\
    \rowcolor{gray!20} \cellcolor{white}
     & \textbf{MR}-TransMIL & 2.01 & \sbest{51.2}$_{10.0}$ & \sbest{47.5}$_{8.9}$ & \best{64.0}$_{3.8}$ & \sbest{79.4}$_{7.1}$ & \sbest{71.4}$_{6.6}$ & \sbest{71.7}$_{6.4}$ & \sbest{93.5}$_{3.9}$ & \sbest{77.1}$_{5.4}$ & \sbest{81.7}$_{3.8}$ \\
    & CATE & 1.93 & 49.1$_{10.3}$ & 45.7$_{6.1}$ & 52.9$_{7.8}$ & 78.9$_{3.8}$ & 70.8$_{3.2}$ & 71.0$_{3.0}$ & 94.7$_{2.1}$ & 81.7$_{3.2}$ & 84.0$_{2.5}$\\
    \rowcolor{gray!20} \cellcolor{white}
     & \textbf{MR}-CATE & 0.84 & \sbest{54.9}$_{18.8}$ & \sbest{51.9}$_{16.1}$ & \sbest{54.3}$_{16.6}$ & \sbest{83.2}$_{7.9}$ & \sbest{75.1}$_{7.3}$ & \sbest{75.3}$_{7.2}$ & \best{96.0}$_{3.2}$ & \best{82.4}$_{4.1}$ & \best{85.3}$_{4.0}$ \\
    & DGRMIL & 4.08 & 42.6$_{6.4}$ & 42.7$_{3.7}$ & 62.3$_{2.2}$ & 68.1$_{9.5}$ & 60.4$_{8.7}$ & 61.8$_{7.6}$ & 93.7$_{2.6}$ & 77.8$_{5.7}$ & 81.6$_{4.2}$\\
    \rowcolor{gray!20} \cellcolor{white}
     & \textbf{MR}-DGRMIL & 3.29 & \best{59.7}$_{2.5}$ & \sbest{52.0}$_{5.4}$ & 58.8$_{3.7}$ & \sbest{74.1}$_{3.8}$ & \sbest{62.9}$_{7.8}$ & \sbest{64.8}$_{5.8}$ & 92.7$_{3.5}$ & 75.8$_{4.9}$ & 80.7$_{3.5}$ \\
    & RRTMIL & 2.44 & 56.0$_{3.7}$ & \best{56.6}$_{6.0}$ & 62.9$_{5.4}$ & 61.2$_{6.7}$ & 55.5$_{6.9}$ & 56.0$_{6.5}$ & 91.6$_{3.6}$ & 74.1$_{8.7}$ & 78.0$_{6.5}$\\
    \rowcolor{gray!20} \cellcolor{white}
    \multirow{-11}{*}{\textbf{2}} & \textbf{MR}-RRTMIL & 2.04 & 47.6$_{8.1}$ & 49.0$_{5.8}$ & 61.7$_{4.2}$ & 55.4$_{13.1}$ & 53.3$_{7.8}$ & 55.2$_{6.5}$ & 91.6$_{4.4}$ & \sbest{74.5}$_{6.0}$ & \sbest{78.7}$_{5.0}$ \\

    \midrule
    \rowcolor{gray!10} \cellcolor{white}
    & ViLaMIL & 40.7 & 53.6$_{12.4}$ & 48.4$_{9.1}$ & 62.3$_{2.3}$ & 82.9$_{2.1}$ & 73.9$_{2.9}$ & 74.3$_{2.6}$ & 96.0$_{1.4}$ & 81.7$_{3.8}$ & 85.5$_{2.9}$ \\
    & ABMIL & 0.26 & 45.5$_{7.6}$ & 49.2$_{7.2}$ & 56.1$_{5.2}$ & 86.5$_{5.1}$ & \best{80.1}$_{5.0}$ & \best{80.2}$_{4.9}$ & 89.3$_{6.4}$ & 74.2$_{11.9}$ & 76.7$_{11.3}$\\
    \rowcolor{gray!20} \cellcolor{white}
     & \textbf{MR}-ABMIL & 0.10 & \sbest{46.1}$_{13.7}$ & 43.8$_{10.0}$ & \sbest{56.6}$_{5.2}$ & \best{86.8}$_{2.5}$ & 79.3$_{2.3}$ & 79.4$_{2.1}$ & \sbest{96.5}$_{3.6}$ & \sbest{85.5}$_{6.0}$ & \sbest{87.8}$_{5.5}$ \\
    & TransMIL & 2.41 & 51.9$_{10.3}$ & 46.7$_{7.5}$ & 51.0$_{9.2}$ & 79.9$_{2.9}$ & 71.0$_{2.8}$ & 71.6$_{2.5}$ & 96.5$_{1.3}$ & 83.5$_{2.3}$ & 86.6$_{1.7}$\\
    \rowcolor{gray!20} \cellcolor{white}
     & \textbf{MR}-TransMIL & 2.01 & \sbest{55.2}$_{11.0}$ & \sbest{50.9}$_{6.3}$ & \sbest{58.6}$_{6.5}$ & \sbest{83.0}$_{3.5}$ & \sbest{75.4}$_{2.6}$ & \sbest{75.5}$_{2.5}$ & \sbest{97.0}$_{1.2}$ & \sbest{84.0}$_{3.3}$ & \sbest{87.4}$_{2.5}$ \\
    & CATE & 1.93 & 59.7$_{12.4}$ & 50.0$_{9.8}$ & 58.3$_{6.7}$ & 81.4$_{3.4}$ & 72.3$_{4.1}$ & 72.8$_{3.6}$ & 96.9$_{1.8}$ & 83.9$_{2.9}$ & 86.5$_{2.5}$\\
    \rowcolor{gray!20} \cellcolor{white}
     & \textbf{MR}-CATE & 0.84 & \sbest{62.1}$_{16.9}$ & \sbest{58.4}$_{15.8}$ & \sbest{62.3}$_{13.5}$ & \best{87.0}$_{3.9}$ & \sbest{79.4}$_{3.2}$ & \sbest{79.4}$_{3.2}$ & \best{98.2}$_{1.2}$ & \best{88.7}$_{2.0}$ & \best{90.5}$_{1.8}$ \\
    & DGRMIL & 4.08 & \best{68.0}$_{24.6}$ & \best{59.4}$_{22.0}$ & \best{66.0}$_{18.4}$ & 71.5$_{10.6}$ & 60.3$_{9.5}$ & 62.4$_{7.7}$ & 96.4$_{0.9}$ & 85.3$_{2.4}$ & 87.4$_{1.7}$\\
    \rowcolor{gray!20} \cellcolor{white}
     & \textbf{MR}-DGRMIL & 3.29 & 65.4$_{19.1}$ & 55.7$_{18.4}$ & 63.6$_{15.2}$ & \sbest{80.1}$_{3.4}$ & \sbest{72.5}$_{3.0}$ & \sbest{73.1}$_{2.8}$ & 95.8$_{2.0}$ & 83.2$_{3.9}$ & 86.3$_{2.7}$ \\
    & RRTMIL & 2.44 & 57.8$_{4.4}$ & 54.8$_{5.2}$ & 57.1$_{6.3}$ & 65.7$_{4.8}$ & 60.8$_{2.9}$ & 61.5$_{3.3}$ & 96.0$_{1.9}$ & 81.6$_{2.7}$ & 84.7$_{2.3}$\\
    \rowcolor{gray!20} \cellcolor{white}
    \multirow{-11}{*}{\textbf{4}} & \textbf{MR}-RRTMIL & 2.04 & 46.5$_{3.8}$ & 42.7$_{3.6}$ & \sbest{62.9}$_{0.6}$ & \sbest{69.9}$_{6.3}$ & \sbest{63.9}$_{5.0}$ & \sbest{64.0}$_{5.0}$ & \sbest{96.7}$_{1.4}$ & \sbest{82.5}$_{2.7}$ & \sbest{86.1}$_{2.2}$ \\

    \midrule
    \rowcolor{gray!10} \cellcolor{white}
    & ViLaMIL & 40.7 & 74.0$_{17.2}$ & 65.9$_{22.1}$ & 71.8$_{16.5}$ & 88.4$_{5.2}$ & 79.2$_{5.7}$ & 79.2$_{5.7}$ & 96.8$_{0.9}$ & 86.2$_{2.1}$ & 88.0$_{1.7}$ \\
    & ABMIL & 0.26 & 49.2$_{5.6}$ & 48.9$_{3.0}$ & 59.7$_{4.8}$ & 87.7$_{5.9}$ & 81.0$_{5.4}$ & 81.1$_{5.4}$ & 91.8$_{6.3}$ & 76.6$_{8.4}$ & 79.1$_{7.9}$\\
    \rowcolor{gray!20} \cellcolor{white}
     & \textbf{MR}-ABMIL & 0.10 & \sbest{56.5}$_{15.5}$ & \sbest{53.8}$_{13.9}$ & \sbest{61.9}$_{8.7}$ & \best{94.0}$_{3.3}$ & \best{85.9}$_{3.3}$ & \best{86.0}$_{3.3}$ & \sbest{98.1}$_{0.8}$ & \sbest{87.6}$_{2.9}$ & \sbest{89.6}$_{2.1}$ \\
    & TransMIL & 2.41 & 57.5$_{2.5}$ & 52.0$_{8.9}$ & 53.6$_{7.4}$ & 87.9$_{6.1}$ & 80.3$_{5.2}$ & 80.4$_{5.2}$ & 97.9$_{0.6}$ & 86.5$_{3.9}$ & 88.4$_{3.1}$\\
    \rowcolor{gray!20} \cellcolor{white}
     & \textbf{MR}-TransMIL & 2.01 & \sbest{65.3}$_{8.5}$ & \sbest{54.2}$_{9.3}$ & \sbest{63.6}$_{3.8}$ & \sbest{91.6}$_{4.1}$ & \sbest{82.4}$_{4.1}$ & \sbest{82.6}$_{4.0}$ & \sbest{97.9}$_{0.4}$ & \sbest{86.7}$_{2.5}$ & \sbest{89.1}$_{1.9}$ \\
    & CATE & 1.93 & 79.8$_{14.7}$ & 69.1$_{20.0}$ & 76.0$_{12.6}$ & 87.7$_{5.7}$ & 78.0$_{4.9}$ & 78.2$_{4.9}$ & 97.8$_{0.8}$ & 86.4$_{3.0}$ & 88.3$_{2.4}$\\
    \rowcolor{gray!20} \cellcolor{white}
     & \textbf{MR}-CATE & 0.84 & \sbest{88.0}$_{14.8}$ & \sbest{81.7}$_{16.0}$ & \sbest{82.9}$_{15.3}$ & \sbest{93.5}$_{5.3}$ & \sbest{84.6}$_{5.7}$ & \sbest{84.7}$_{5.6}$ & \best{98.6}$_{0.3}$ & \best{89.1}$_{1.3}$ & \best{90.7}$_{1.0}$ \\
    & DGRMIL & 4.08 & 84.7$_{10.0}$ & 72.5$_{17.8}$ & 77.4$_{12.6}$ & 85.5$_{6.3}$ & 73.4$_{8.6}$ & 74.2$_{7.7}$ & 96.7$_{1.1}$ & 86.8$_{2.9}$ & 88.7$_{2.3}$\\
    \rowcolor{gray!20} \cellcolor{white}
     & \textbf{MR}-DGRMIL & 3.29 & \best{93.4}$_{0.5}$ & \best{86.0}$_{3.5}$ & \best{86.7}$_{3.8}$ & \sbest{91.6}$_{7.3}$ & \sbest{82.8}$_{9.6}$ & \sbest{83.0}$_{9.4}$ & \sbest{97.6}$_{1.0}$ & 85.1$_{3.6}$ & 87.9$_{1.7}$ \\
    & RRTMIL & 2.44 & 68.8$_{17.8}$ & 62.6$_{15.4}$ & 63.7$_{15.2}$ & 82.5$_{9.9}$ & 73.9$_{6.7}$ & 74.2$_{6.5}$ & 97.0$_{1.1}$ & 85.2$_{3.3}$ & 86.9$_{3.2}$\\
    \rowcolor{gray!20} \cellcolor{white}
    \multirow{-11}{*}{\textbf{8}} & \textbf{MR}-RRTMIL & 2.04 & \sbest{85.3}$_{16.3}$ & \sbest{79.2}$_{16.6}$ & \sbest{81.7}$_{13.3}$ & \sbest{87.3}$_{7.8}$ & \sbest{79.6}$_{7.9}$ & \sbest{79.8}$_{7.8}$ & \sbest{97.8}$_{0.4}$ & \sbest{86.3}$_{1.5}$ & \sbest{87.8}$_{1.4}$ \\

    \midrule
    \rowcolor{gray!10} \cellcolor{white}
    & ViLaMIL & 40.7 & 81.4$_{23.1}$ & 77.5$_{22.0}$ & 82.0$_{15.0}$ & 92.8$_{2.9}$ & 84.9$_{2.7}$ & 85.0$_{2.7}$ & 97.4$_{1.1}$ & 85.5$_{3.0}$ & 87.9$_{2.2}$ \\
    & ABMIL & 0.26 & 73.2$_{17.1}$ & 71.2$_{15.4}$ & 75.7$_{10.2}$ & 93.6$_{2.4}$ & 87.8$_{3.4}$ & 87.8$_{3.4}$ & 94.7$_{5.2}$ & 81.0$_{7.8}$ & 82.8$_{8.5}$\\
    \rowcolor{gray!20} \cellcolor{white}
     & \textbf{MR}-ABMIL & 0.10 & \sbest{79.0}$_{23.1}$ & \sbest{74.5}$_{22.4}$ & \sbest{78.0}$_{17.1}$ & \best{96.5}$_{1.2}$ & \best{89.1}$_{1.0}$ & \best{89.1}$_{1.0}$ & \sbest{98.3}$_{0.7}$ & \sbest{89.1}$_{2.8}$ & \sbest{91.1}$_{2.0}$ \\
    & TransMIL & 2.41 & 69.7$_{15.7}$ & 62.5$_{19.2}$ & 69.3$_{13.3}$ & 95.6$_{1.7}$ & 88.2$_{3.6}$ & 88.2$_{3.6}$ & 98.4$_{0.3}$ & 89.3$_{1.6}$ & 90.9$_{0.7}$\\
    \rowcolor{gray!20} \cellcolor{white}
     & \textbf{MR}-TransMIL & 2.01 & \sbest{79.3}$_{17.4}$ & \sbest{75.1}$_{14.8}$ & \sbest{77.1}$_{13.2}$ & 93.2$_{4.4}$ & 84.0$_{6.1}$ & 84.1$_{6.0}$ & \best{98.6}$_{0.3}$ & 89.3$_{1.6}$ & \sbest{91.2}$_{1.0}$ \\
    & CATE & 1.93 & 92.2$_{3.1}$ & 87.5$_{3.2}$ & 88.5$_{2.8}$ & 94.2$_{2.1}$ & 85.1$_{2.0}$ & 85.2$_{2.0}$ & 97.9$_{0.6}$ & 87.4$_{2.7}$ & 89.3$_{2.0}$\\
    \rowcolor{gray!20} \cellcolor{white}
     & \textbf{MR}-CATE & 0.84 & \sbest{92.4}$_{3.2}$ & 86.7$_{3.9}$ & 87.8$_{3.4}$ & \sbest{96.1}$_{1.4}$ & \sbest{88.6}$_{3.3}$ & \sbest{88.7}$_{3.3}$ & \best{98.6}$_{0.6}$ & \best{89.8}$_{3.1}$ & \best{91.6}$_{2.3}$ \\
    & DGRMIL & 4.08 & 88.5$_{5.0}$ & 79.4$_{10.2}$ & 80.9$_{10.3}$ & 92.4$_{3.4}$ & 85.8$_{4.0}$ & 85.8$_{3.9}$ & 94.8$_{1.4}$ & 86.3$_{1.0}$ & 88.7$_{1.2}$\\
    \rowcolor{gray!20} \cellcolor{white}
     & \textbf{MR}-DGRMIL & 3.29 & \sbest{92.3}$_{5.8}$ & \sbest{87.6}$_{5.8}$ & \sbest{88.7}$_{5.2}$ & \sbest{93.9}$_{1.7}$ & \sbest{86.6}$_{2.7}$ & \sbest{86.7}$_{2.7}$ & \sbest{97.7}$_{0.8}$ & \sbest{88.0}$_{2.0}$ & \sbest{90.3}$_{1.5}$ \\
    & RRTMIL & 2.44 & 84.0$_{16.9}$ & 74.8$_{12.9}$ & 75.5$_{12.7}$ & 90.7$_{3.2}$ & 82.6$_{4.2}$ & 82.7$_{4.1}$ & 98.3$_{0.6}$ & 86.9$_{2.6}$ & 88.8$_{2.4}$\\
    \rowcolor{gray!20} \cellcolor{white}
    \multirow{-11}{*}{\textbf{16}} & \textbf{MR}-RRTMIL & 2.04 & \best{93.7}$_{1.6}$ & \best{89.0}$_{2.4}$ & \best{89.8}$_{2.3}$ & \sbest{91.0}$_{5.4}$ & 82.5$_{7.6}$ & \sbest{82.8}$_{7.1}$ & 97.8$_{0.8}$ & 86.5$_{4.1}$ & \sbest{88.9}$_{2.9}$ \\
    \bottomrule
    \end{tabular}
    }
    \label{tab:main_exp_appendix}
\end{table}

\begin{table}[t!]
    \small
    \caption{Experimental results comparing our method with baselines are presented under various shot settings using data-imbalance metrics (AUPRC and F1-score). Performance metrics in which our method outperforms the baseline are highlighted in italic blue and the best metrics are in bold red.}
    \centering
    \begin{tabular}{C{0.1cm}p{1.8cm}C{0.4cm}|C{1.2cm}C{1.2cm}C{1.2cm}C{1.2cm}C{1.2cm}C{1.2cm}}
    \toprule
    \bf \multirow{2}{*}{$k$} & \bf \multirow{2}{*}{Methods} & \bf \multirow{2}{*}{\#P} & \multicolumn{2}{c}{\textbf{Camelyon16}} & \multicolumn{2}{c}{\textbf{NSCLC}} & \multicolumn{2}{c}{\textbf{RCC}} \\
    & &  & \textbf{AUPRC$\uparrow$} & \textbf{F1$\uparrow$} & \textbf{AUPRC$\uparrow$} & \textbf{F1$\uparrow$} & \textbf{AUPRC$\uparrow$} & \textbf{F1$\uparrow$} \\
    \midrule

    & ABMIL & 0.26 & 35.8$_{6.6}$ & 43.1$_{6.5}$ & 74.0$_{12.0}$ & 68.4$_{9.8}$ & 74.7$_{11.5}$ & 63.2$_{8.1}$\\
    \rowcolor{gray!20} \cellcolor{white}
     & \textbf{MR}-ABMIL & 0.10 & \sbest{37.9}$_{12.1}$ & \sbest{43.2}$_{8.6}$ & \best{81.2}$_{9.3}$ & \best{75.9}$_{7.2}$ & \sbest{86.4}$_{11.3}$ & \sbest{78.1}$_{7.5}$ \\
    & CATE & 1.93 & 40.7$_{9.9}$ & 45.7$_{6.1}$ & 74.6$_{6.5}$ & 70.8$_{3.2}$ & 89.6$_{5.0}$ & 81.7$_{3.2}$\\
    \rowcolor{gray!20} \cellcolor{white}
    \multirow{-4}{*}{\textbf{2}} & \textbf{MR}-CATE & 0.84 & \best{48.2}$_{21.2}$ & \best{51.9}$_{16.1}$ & \sbest{80.1}$_{10.1}$ & \sbest{75.1}$_{7.3}$ & \best{91.0}$_{6.0}$ & \best{82.4}$_{4.1}$ \\

    \midrule
    & ABMIL & 0.26 & 40.7$_{6.9}$ & 49.2$_{7.2}$ & \best{84.7}$_{5.8}$ & \best{80.1}$_{5.0}$ & 82.7$_{9.4}$ & 74.2$_{11.9}$\\
    \rowcolor{gray!20} \cellcolor{white}
     & \textbf{MR}-ABMIL & 0.10 & 39.0$_{13.5}$ & 43.8$_{10.0}$ & 81.9$_{6.1}$ & 79.3$_{2.3}$ & \sbest{93.6}$_{5.1}$ & \sbest{85.5}$_{6.0}$ \\
    & CATE & 1.93 & 52.6$_{12.1}$ & 50.0$_{9.8}$ & 76.9$_{4.9}$ & 72.3$_{4.1}$ & 94.3$_{2.5}$ & 83.9$_{2.9}$\\
    \rowcolor{gray!20} \cellcolor{white}
    \multirow{-4}{*}{\textbf{4}} & \textbf{MR}-CATE & 0.84 & \best{54.3}$_{19.6}$ & \best{58.4}$_{15.8}$ & \sbest{83.9}$_{7.9}$ & \sbest{79.4}$_{3.2}$ & \best{96.1}$_{2.4}$ & \best{88.7}$_{2.0}$ \\

    \midrule
    & ABMIL & 0.26 & 45.4$_{4.1}$ & 48.9$_{3.0}$ & 81.7$_{7.2}$ & 81.0$_{5.4}$ & 87.7$_{8.2}$ & 76.6$_{8.4}$\\
    \rowcolor{gray!20} \cellcolor{white}
     & \textbf{MR}-ABMIL & 0.10 & \sbest{53.0}$_{19.1}$ & \sbest{53.8}$_{13.9}$ & \sbest{91.7}$_{5.2}$ & \best{85.9}$_{3.3}$ & \sbest{96.4}$_{1.2}$ & \sbest{87.6}$_{2.9}$ \\
    & CATE & 1.93 & 76.8$_{19.0}$ & 69.1$_{20.0}$ & 84.9$_{6.6}$ & 78.0$_{4.9}$ & 96.0$_{1.1}$ & 86.4$_{3.0}$\\
    \rowcolor{gray!20} \cellcolor{white}
    \multirow{-4}{*}{\textbf{8}} & \textbf{MR}-CATE & 0.84 & \best{84.9}$_{20.9}$ & \best{81.7}$_{16.0}$ & \best{92.6}$_{6.0}$ & \sbest{84.6}$_{5.7}$ & \best{97.2}$_{0.5}$ & \best{89.1}$_{1.3}$ \\

    \midrule
    & ABMIL & 0.26 & 72.5$_{18.5}$ & 71.2$_{15.4}$ & 91.0$_{4.4}$ & 87.8$_{3.4}$ & 90.8$_{8.2}$ & 81.0$_{7.8}$\\
    \rowcolor{gray!20} \cellcolor{white}
     & \textbf{MR}-ABMIL & 0.10 & \sbest{77.9}$_{26.5}$ & \sbest{74.5}$_{22.4}$ & \best{96.3}$_{1.5}$ & \best{89.1}$_{1.0}$ & \sbest{96.4}$_{1.8}$ & \sbest{89.1}$_{2.8}$ \\
    & CATE & 1.93 & \best{92.5}$_{2.7}$ & \best{87.5}$_{3.2}$ & 92.8$_{3.7}$ & 85.1$_{2.0}$ & 95.9$_{1.3}$ & 87.4$_{2.7}$\\
    \rowcolor{gray!20} \cellcolor{white}
    \multirow{-4}{*}{\textbf{16}} & \textbf{MR}-CATE & 0.84 & 92.2$_{3.3}$ & 86.7$_{3.9}$ & \sbest{96.0}$_{1.8}$ & \sbest{88.6}$_{3.3}$ & \best{97.2}$_{1.1}$ & \best{89.8}$_{3.1}$ \\
    \bottomrule
    \end{tabular}
    \label{tab:main_exp_auprc}
\end{table}

\begin{table}
    \small
    \caption{Experimental results on two treatment response datasets.}
    \centering
    \begin{tabular}{C{0.1cm}p{1.8cm}C{0.4cm}|C{1cm}C{1cm}C{1cm}C{1cm}C{1cm}C{1cm}}
    \toprule
    \bf \multirow{2}{*}{$k$} & \bf \multirow{2}{*}{Methods} & \bf \multirow{2}{*}{\#P} & \multicolumn{3}{c}{\textbf{Boehmk}} & \multicolumn{3}{c}{\textbf{Trastuzumab}} \\
    & &  & \textbf{AUC$\uparrow$} & \textbf{F1$\uparrow$} & \textbf{Acc.$\uparrow$} & \textbf{AUC$\uparrow$} & \textbf{F1$\uparrow$} & \textbf{Acc.$\uparrow$} \\
    \midrule

    & ABMIL & 0.26 & 56.2$_{6.7}$ & 45.9$_{8.6}$ & 49.7$_{10.8}$ & 47.7$_{5.6}$ & 40.3$_{3.9}$ & 42.4$_{2.6}$\\
    \rowcolor{gray!20} \cellcolor{white}
    \multirow{-2}{*}{\textbf{4}} & \textbf{MR}-ABMIL & 0.18 & \best{65.4}$_{11.3}$ & \best{56.9}$_{8.5}$ & \best{65.1}$_{3.7}$ & \best{56.9}$_{2.3}$ & \best{40.9}$_{8.7}$ & \best{58.8}$_{0.0}$ \\

    \midrule
    & ABMIL & 0.26 & 53.4$_{5.1}$ & 46.1$_{4.6}$ & 50.9$_{8.2}$ & 49.4$_{3.7}$ & \best{48.8}$_{10.8}$ & \best{54.1}$_{7.7}$\\
    \rowcolor{gray!20} \cellcolor{white}
    \multirow{-2}{*}{\textbf{8}} & \textbf{MR}-ABMIL & 0.18 & \best{63.4}$_{9.5}$ & \best{57.6}$_{7.1}$ & \best{62.3}$_{9.8}$ & \best{55.1}$_{4.4}$ & 40.3$_{4.5}$ & \best{54.1}$_{6.4}$ \\

    \midrule
    & ABMIL & 0.26 & 52.6$_{5.5}$ & 46.9$_{7.3}$ & 50.3$_{8.0}$ & 46.3$_{10.0}$ & 36.1$_{1.3}$ & 48.2$_{6.4}$\\
    \rowcolor{gray!20} \cellcolor{white}
    \multirow{-2}{*}{\textbf{16}} & \textbf{MR}-ABMIL & 0.18 & \best{63.5}$_{4.1}$ & \best{55.1}$_{3.4}$ & \best{63.4}$_{6.5}$ & \best{57.7}$_{0.8}$ & \best{43.6}$_{3.3}$ & \best{56.5}$_{3.2}$ \\
    \bottomrule
    \end{tabular}
    \label{tab:main_exp_response}
\end{table}

\begin{table}
    \small
    \centering
    \caption{Performance of MR-ABMIL with replaced linear layers under varying shot settings on Camelyon16, TCGA-NSCLC, and TCGA-RCC datasets. ``U'' and ``V'' denote two matrices in MR-ABMIL.}
    \resizebox{\textwidth}{!}{%
    \begin{tabular}{C{0.1cm}p{1.1cm}C{0.4cm}|C{1cm}C{1cm}C{1cm}C{1cm}C{1cm}C{1cm}C{1cm}C{1cm}C{1cm}}
    \toprule
    \multirow{2}{*}{$k$} & \multirow{2}{*}{\bf Methods} & \multirow{2}{*}{\bf \#P} & \multicolumn{3}{c}{\textbf{Camelyon16}} & \multicolumn{3}{c}{\textbf{NSCLC}} & \multicolumn{3}{c}{\textbf{RCC}} \\
    & &  & \textbf{AUC$\uparrow$} & \textbf{F1$\uparrow$} & \textbf{Acc.$\uparrow$} & \textbf{AUC$\uparrow$} & \textbf{F1$\uparrow$} & \textbf{Acc.$\uparrow$} & \textbf{AUC$\uparrow$} & \textbf{F1$\uparrow$} & \textbf{Acc.$\uparrow$} \\

    \midrule
     & $\bd{V + U}$& 0.10 & 46.1$_{13.7}$ & 43.8$_{10.0}$ & 56.6$_{5.2}$ & \best{86.8}$_{2.5}$ & \best{79.3}$_{2.3}$ & \best{79.4}$_{2.1}$ & \best{96.5}$_{3.6}$ & \best{85.5}$_{6.0}$ & \best{87.8}$_{5.5}$ \\
     & $\bd{V}$ & 0.18 & 45.6$_{0.3}$ & 43.9$_{0.2}$ & \best{59.8}$_{1.4}$ & 86.1$_{3.5}$ & 78.2$_{3.2}$ & 78.4$_{3.1}$ & 94.0$_{4.8}$ & 79.0$_{7.5}$ & 81.7$_{7.3}$ \\
    \multirow{-3}{*}{\textbf{4}} & $\bd{U}$ & 0.18 & \best{52.8}$_{10.5}$ & \best{47.9}$_{6.6}$ & 59.4$_{2.5}$ & 85.5$_{6.8}$ & 77.2$_{9.5}$ & 77.6$_{8.9}$ & 90.4$_{8.4}$ & 73.4$_{11.5}$ & 77.4$_{10.3}$ \\

    \midrule
     & $\bd{V + U}$& 0.10 & 56.5$_{15.5}$ & 53.8$_{13.9}$ & 61.9$_{8.7}$ & 94.0$_{3.3}$ & 85.9$_{3.3}$ & 86.0$_{3.3}$ & \best{98.1}$_{0.8}$ & \best{87.6}$_{2.9}$ & \best{89.6}$_{2.1}$ \\
     & $\bd{V}$ & 0.18 & 71.4$_{14.9}$ & 65.5$_{12.1}$ & 69.9$_{7.3}$ & \best{95.1}$_{3.5}$ & \best{86.1}$_{4.7}$ & \best{86.2}$_{4.6}$ & 96.8$_{1.3}$ & 84.3$_{5.4}$ & 87.0$_{4.4}$ \\
    \multirow{-3}{*}{\textbf{8}} & $\bd{U}$ & 0.18 & \best{72.8}$_{19.1}$ & \best{67.8}$_{20.0}$ & \best{71.5}$_{16.6}$ & 94.6$_{4.7}$ & 85.8$_{6.8}$ & 85.9$_{6.7}$ & 95.1$_{3.8}$ & 79.8$_{11.7}$ & 83.6$_{7.9}$ \\

    \midrule
     & $\bd{V + U}$& 0.10 & 79.0$_{23.1}$ & 74.5$_{22.4}$ & 78.0$_{17.1}$ & 96.5$_{1.2}$ & 89.1$_{1.0}$ & 89.1$_{1.0}$ & \best{98.3}$_{0.7}$ & \best{89.1}$_{2.8}$ & \best{91.1}$_{2.0}$ \\
     & $\bd{V}$ & 0.18 & \best{86.4}$_{11.7}$ & \best{80.9}$_{12.0}$ & \best{81.9}$_{11.8}$ & 96.0$_{1.7}$ & 88.7$_{1.4}$ & 88.7$_{1.4}$ & \best{98.3}$_{0.8}$ & 87.7$_{2.3}$ & 90.0$_{2.0}$ \\
    \multirow{-3}{*}{\textbf{16}} & $\bd{U}$ & 0.18 & 82.5$_{18.4}$ & 77.9$_{18.3}$ & 81.9$_{12.0}$ & \best{96.9}$_{1.1}$ & \best{89.8}$_{1.5}$ & \best{89.8}$_{1.5}$ & 97.7$_{1.1}$ & 86.3$_{3.9}$ & 88.4$_{3.5}$ \\
    \bottomrule
    \end{tabular}
    }
    \label{tab:ablation_position_appendix}
\end{table}

\subsection{Subspace Embedding}
This property guarantees that a random projection acts as a near‐perfect isometric embedding for any fixed low-dimensional subspace. It means that all geometric relationships, lengths and angles, within that subspace are preserved almost exactly after compression. As a result, any downstream algorithm that relies on the subspace structure (for example, solving linear least-squares problems or performing spectral decompositions) will perform almost identically in the reduced space, yet with greatly reduced computational and storage costs.

For a $d$-dimensional subspace with basis $A\in\mathbb{R}^{d_0\times d}$,
one shows with a net argument that if
\begin{equation}
d_1 = O\bigl(\varepsilon^{-2}\,d\bigr),
\end{equation}
then with high probability
\begin{equation}
(1-\varepsilon)\,A^{\top}A
\preceq
A^{\top}M^{\top}M\,A
\preceq
(1+\varepsilon)\,A^{\top}A.
\end{equation}

\subsection{Manifold Geometry}
This property ensures that a random projection will faithfully reproduce the intrinsic geometry of any low‐dimensional manifold embedded in high‐dimensional space. Even though the data may lie on a curved, nonlinear surface, we can compress it down to far fewer dimensions and still retain nearly all of the true ``geodesic'' distances along that surface. The key idea is that the manifold can be approximated by a finite set of local patches, and the projection preserves each patch’s geometry so accurately that the entire shape remains intact. As a result, analyses that rely on the manifold’s structure, such as nonlinear dimensionality reduction or manifold‐based learning, remain valid and effective after compression.

Cover a $d$-dimensional manifold of volume $V$ by $N\approx(V/\tau)^d$ balls.
If
\begin{equation}
d_1 \ge C\,\varepsilon^{-2}\,d\,\ln\!\frac{V}{\tau},
\end{equation}
then all geodesic distances are preserved within $1\pm\varepsilon$.

\subsection{Cluster Labels}
This criterion ensures that the worst-case contraction of the gap between clusters still exceeds the worst-case expansion of each cluster’s size. In practice it means that random projection will not cause any overlap between clusters that were originally well separated. As a result, any clustering or classification based on distance remains unchanged, and all original labels are preserved.

Clusters of diameter $D$ separated by $\Delta$ satisfy label preservation if
\begin{equation}
(1-\varepsilon)\,\Delta > (1+\varepsilon)\,D.
\end{equation}

\subsection{Nearest Neighbors}
The nearest‐neighbor property refers to the guarantee that, after random projection, each point’s set of $k$ closest points (its $k$-nearest‐neighbor graph) remains exactly the same as in the original high-dimensional space.

Fast JL transforms $M=PHD$ satisfy the same distortion bounds and run in time $O(d_0\log d_0 + d_1)$;
the $k$-NN graph is unchanged if
\begin{equation}
\varepsilon < \frac{\gamma}{2D}.
\end{equation}

\subsection{Simplex Volume}
A simplex in this context is the most elementary convex polytope determined by one ``base'' point together with a set of other points that do not all lie in a lower‐dimensional subspace.  In two dimensions it is a triangle, in three it is a tetrahedron, and in higher dimensions the generalization of those.  Its volume quantifies the amount of space enclosed by those corner points.  Because the projection acts almost as an exact length‐preserver on each independent direction in the simplex, the total enclosed volume can only change by the small, precisely bounded factor implied by the near‐isometry.

For $d+1$ affinely independent points,
\begin{equation}
\mathrm{Vol}(\Delta)
=\frac{1}{d!}\Bigl|\det[x_1-x_0,\dots,x_d-x_0]\Bigr|.
\end{equation}
Since $M$ is an approximate isometry on this $d$-dimensional span,
its singular values lie in $[\sqrt{1-\varepsilon},\sqrt{1+\varepsilon}]$, giving
\begin{equation}
(1-\varepsilon)^{d/2}\,\mathrm{Vol}(\Delta)
\le
\mathrm{Vol}(\Delta')
\le
(1+\varepsilon)^{d/2}\,\mathrm{Vol}(\Delta),
\end{equation}
hence squared volume changes by at most $(1\pm\varepsilon)^{d}$.

\begin{table}
    \small
    \centering
    \caption{Experimental results on different initialization methods in MR-CATE. ``K.'' and ``X.'' stands for Kaiming and Xavier initialization. ``U.'' and ``N.'' stands for uniform and normal distribution. The number following is the input parameter for initialization. The gray rows stand for our main setting.}
    \resizebox{\textwidth}{!}{%
    \begin{tabular}{C{0.1cm}p{1.2cm}|C{1cm}C{1cm}C{1cm}C{1cm}C{1cm}C{1cm}C{1cm}C{1cm}C{1cm}}
    \toprule
    \multirow{2}{*}{$k$} & \multirow{2}{*}{Methods} & \multicolumn{3}{c}{\textbf{Camelyon16}} & \multicolumn{3}{c}{\textbf{NSCLC}} & \multicolumn{3}{c}{\textbf{RCC}} \\
    & & \textbf{AUC$\uparrow$} & \textbf{F1$\uparrow$} & \textbf{Acc.$\uparrow$} & \textbf{AUC$\uparrow$} & \textbf{F1$\uparrow$} & \textbf{Acc.$\uparrow$} & \textbf{AUC$\uparrow$} & \textbf{F1$\uparrow$} & \textbf{Acc.$\uparrow$} \\

    \midrule
     & K. N. 0 & 63.0$_{16.1}$ & 49.5$_{19.7}$ & \best{65.7}$_{11.1}$ & 82.2$_{6.2}$ & 75.0$_{5.7}$ & 75.2$_{5.5}$ & 98.0$_{1.4}$ & 87.8$_{1.8}$ & 90.1$_{1.8}$ \\
     & K. U. 0 & 62.4$_{17.0}$ & 48.9$_{16.9}$ & 63.7$_{10.5}$ & 86.5$_{4.4}$ & 79.4$_{3.2}$ & \best{79.4}$_{3.1}$ & 98.1$_{1.4}$ & 88.2$_{3.0}$ & 90.3$_{2.5}$ \\
     \rowcolor{gray!20}
     & K. U. $\sqrt{5}$ & 62.1$_{16.9}$ & \best{58.4}$_{15.8}$ & 62.3$_{13.5}$ & \best{87.0}$_{3.9}$ & \best{79.4}$_{3.2}$ & 79.4$_{3.2}$ & \best{98.2}$_{1.2}$ & \best{88.7}$_{2.0}$ & \best{90.5}$_{1.8}$ \\
     & X. N. 0 & \best{66.2}$_{14.1}$ & 48.8$_{20.1}$ & 64.7$_{12.9}$ & 84.0$_{5.3}$ & 76.3$_{4.2}$ & 76.4$_{4.2}$ & 98.1$_{1.4}$ & 87.9$_{3.4}$ & 90.1$_{2.9}$ \\
    \multirow{-5}{*}{\textbf{4}} & X. U. 0 & 61.1$_{17.1}$ & 51.2$_{17.8}$ & 63.6$_{12.8}$ & 86.4$_{4.6}$ & 79.1$_{2.7}$ & 79.1$_{2.6}$ & 98.1$_{1.3}$ & 88.2$_{2.9}$ & 90.2$_{2.6}$ \\

    \midrule
     & K. N. 0 & \best{89.5}$_{13.8}$ & 78.1$_{22.4}$ & \best{83.3}$_{12.2}$ & 90.4$_{5.9}$ & 82.1$_{5.9}$ & 82.2$_{5.9}$ & 98.5$_{0.5}$ & 89.8$_{2.1}$ & 91.1$_{1.9}$ \\
     & K. U. 0 & 87.4$_{13.8}$ & 77.7$_{20.9}$ & 82.8$_{11.3}$ & 91.9$_{4.8}$ & 82.9$_{4.4}$ & 83.0$_{4.3}$ & 98.5$_{0.5}$ & 89.1$_{2.0}$ & 90.7$_{1.6}$ \\
     \rowcolor{gray!20}
     & K. U. $\sqrt{5}$ & 88.0$_{14.8}$ & \best{81.7}$_{16.0}$ & 82.9$_{15.3}$ & \best{93.5}$_{5.3}$ & 84.6$_{5.7}$ & 84.7$_{5.6}$ & 98.6$_{0.3}$ & 89.1$_{1.3}$ & 90.7$_{1.0}$ \\
     & X. N. 0 & 89.1$_{13.2}$ & 75.9$_{21.1}$ & 81.2$_{11.0}$ & 92.1$_{7.1}$ & 84.1$_{6.7}$ & 84.2$_{6.7}$ & 98.6$_{0.4}$ & \best{90.3}$_{1.1}$ & \best{91.5}$_{1.0}$ \\
    \multirow{-5}{*}{\textbf{8}} & X. U. 0 & 85.6$_{14.7}$ & 76.2$_{19.5}$ & 81.6$_{11.1}$ & \best{93.5}$_{4.7}$ & \best{85.0}$_{5.3}$ & \best{85.0}$_{5.2}$ & \best{98.7}$_{0.4}$ & 90.0$_{1.9}$ & \best{91.5}$_{1.6}$ \\

    \midrule
     & K. N. 0 & 93.8$_{1.7}$ & 88.9$_{4.6}$ & 89.6$_{4.5}$ & 95.4$_{2.2}$ & 86.9$_{2.8}$ & 87.0$_{2.8}$ & 98.6$_{0.4}$ & 88.2$_{2.2}$ & 89.6$_{2.0}$ \\
     & K. U. 0 & 92.2$_{2.2}$ & 87.5$_{1.9}$ & 88.4$_{2.0}$ & 94.6$_{2.3}$ & 86.3$_{2.2}$ & 86.4$_{2.1}$ & 98.6$_{0.4}$ & 88.8$_{2.3}$ & 90.9$_{2.0}$ \\
     \rowcolor{gray!20}
     & K. U. $\sqrt{5}$ & 92.4$_{3.2}$ & 86.7$_{3.9}$ & 87.8$_{3.4}$ & \best{96.1}$_{1.4}$ & 88.6$_{3.3}$ & 88.7$_{3.3}$ & 98.6$_{0.6}$ & \best{89.8}$_{3.1}$ & \best{91.6}$_{2.3}$ \\
     & X. N. 0 & \best{94.5}$_{1.5}$ & \best{90.1}$_{1.5}$ & \best{90.7}$_{1.6}$ & 95.6$_{2.0}$ & 88.5$_{1.4}$ & 88.5$_{1.4}$ & \best{98.7}$_{0.5}$ & 88.9$_{2.2}$ & 90.9$_{1.4}$ \\
    \multirow{-5}{*}{\textbf{16}} & X. U. 0 & 92.7$_{3.7}$ & 88.5$_{3.5}$ & 89.1$_{3.4}$ & 95.8$_{1.3}$ & \best{88.7}$_{3.1}$ & \best{88.8}$_{3.1}$ & \best{98.7}$_{0.4}$ & 89.5$_{3.1}$ & 91.2$_{2.5}$ \\
    \bottomrule
    \end{tabular}
    }
    \label{tab:ablation_initialization_appendix}
\end{table}

\begin{table}[t!]
    \small
    \centering
    \caption{Experimental results on different pretrained feature extractors under various shot settings. ``R50'' stands for ResNet50, and ``MR-'' stands for MIL model with our MR block.}
    \resizebox{\textwidth}{!}{%
    \begin{tabular}{C{0.1cm}p{1.2cm}|C{1cm}C{1cm}C{1cm}C{1cm}C{1cm}C{1cm}|C{1cm}C{1cm}C{1cm}C{1cm}C{1cm}C{1cm}}
    \toprule
    \multirow{3}{*}{$k$} & \multirow{3}{*}{\bf Methods} & \multicolumn{6}{c|}{\textbf{ABMIL}} & \multicolumn{6}{c}{\textbf{CATE}} \\
    & & \multicolumn{3}{c}{\textbf{NSCLC}} & \multicolumn{3}{c|}{\textbf{RCC}} & \multicolumn{3}{c}{\textbf{NSCLC}} & \multicolumn{3}{c}{\textbf{RCC}} \\
    & & \textbf{AUC$\uparrow$} & \textbf{F1$\uparrow$} & \textbf{Acc.$\uparrow$} & \textbf{AUC$\uparrow$} & \textbf{F1$\uparrow$} & \textbf{Acc.$\uparrow$} & \textbf{AUC$\uparrow$} & \textbf{F1$\uparrow$} & \textbf{Acc.$\uparrow$} & \textbf{AUC$\uparrow$} & \textbf{F1$\uparrow$} & \textbf{Acc.$\uparrow$} \\

    \midrule
     & R50 & 61.6$_{2.6}$ & 53.5$_{6.8}$ & 57.1$_{3.6}$ & 74.7$_{7.4}$ & 57.7$_{7.5}$ & 61.1$_{9.2}$ & 53.2$_{8.2}$ & 40.3$_{10.5}$ & 51.0$_{3.3}$ & 77.8$_{3.4}$ & 47.0$_{5.3}$ & 61.7$_{9.0}$\\
     \rowcolor{gray!20} & MR-R50 & \sbest{62.3}$_{6.2}$ & \sbest{55.3}$_{11.7}$ & \sbest{59.0}$_{4.7}$ & \sbest{80.2}$_{5.9}$ & \sbest{63.1}$_{7.6}$ & \sbest{67.0}$_{6.6}$ & 51.4$_{9.1}$ & \sbest{41.3}$_{11.4}$ & \sbest{51.4}$_{4.2}$ & \sbest{78.6}$_{2.7}$ & \sbest{48.2}$_{8.9}$ & 59.6$_{14.0}$\\
     & UNI & 74.5$_{8.5}$ & 68.5$_{7.1}$ & 69.0$_{7.0}$ & 96.6$_{1.8}$ & 83.8$_{4.3}$ & 86.1$_{2.9}$ & \best{64.6}$_{8.6}$ & 41.6$_{12.8}$ & 53.3$_{7.6}$ & 90.5$_{4.5}$ & 68.8$_{5.8}$ & 71.4$_{9.0}$\\
     \rowcolor{gray!20} \multirow{-4}{*}{\textbf{4}} & MR-UNI & \best{86.5}$_{8.1}$ & \best{77.5}$_{9.7}$ & \best{78.2}$_{8.4}$ & \best{97.9}$_{1.4}$ & \best{87.5}$_{2.3}$ & \best{89.6}$_{2.6}$ & 59.0$_{11.5}$ & \best{49.7}$_{12.7}$ & \best{56.1}$_{9.7}$ & \best{92.1}$_{4.5}$ & \best{73.5}$_{9.7}$ & \best{76.9}$_{9.1}$\\

    \midrule
     & R50 & 59.0$_{3.7}$ & 54.7$_{8.1}$ & 57.0$_{4.6}$ & 74.6$_{6.9}$ & 56.6$_{9.0}$ & 58.5$_{10.7}$ & 64.9$_{7.8}$ & 57.4$_{10.6}$ & 60.0$_{6.9}$ & 82.6$_{11.0}$ & 64.2$_{10.2}$ & 68.9$_{7.0}$\\
     \rowcolor{gray!20} & MR-R50 & \sbest{64.3}$_{4.2}$ & \sbest{59.7}$_{3.3}$ & \sbest{60.1}$_{3.2}$ & \sbest{80.8}$_{4.5}$ & \sbest{62.3}$_{6.7}$ & \sbest{63.9}$_{8.3}$ & 63.5$_{6.4}$ & \sbest{60.7}$_{5.7}$ & \sbest{61.4}$_{5.0}$ & \sbest{89.5}$_{3.6}$ & \sbest{75.0}$_{7.2}$ & \sbest{76.6}$_{6.0}$\\
     & UNI & 80.8$_{7.9}$ & 72.8$_{6.2}$ & 73.1$_{6.2}$ & 98.6$_{0.4}$ & 88.7$_{2.1}$ & 90.3$_{1.8}$ & 82.7$_{7.8}$ & 72.4$_{5.1}$ & 73.0$_{5.1}$ & 98.1$_{0.6}$ & 87.8$_{3.4}$ & 89.3$_{3.0}$\\
     \rowcolor{gray!20} \multirow{-4}{*}{\textbf{8}} & MR-UNI & \best{91.3}$_{5.0}$ & \best{81.7}$_{4.4}$ & \best{81.8}$_{4.3}$ & \best{98.6}$_{0.6}$ & \best{91.2}$_{1.8}$ & \best{92.3}$_{1.4}$ & \best{88.8}$_{6.6}$ & \best{80.6}$_{6.8}$ & \best{80.8}$_{6.7}$ & \best{98.5}$_{0.4}$ & \best{90.8}$_{1.4}$ & \best{91.8}$_{1.1}$\\

    \midrule
     & R50 & 65.6$_{9.9}$ & 62.3$_{9.0}$ & 62.5$_{8.8}$ & 84.0$_{4.0}$ & 67.7$_{3.8}$ & 70.9$_{5.1}$ &  66.3$_{10.1}$ & 56.2$_{12.8}$ & 61.1$_{7.4}$ & 90.0$_{8.0}$ & 74.7$_{13.2}$ & 78.0$_{9.0}$\\
     \rowcolor{gray!20} & MR-R50 & \sbest{67.2}$_{6.6}$ & \sbest{63.7}$_{5.0}$ & \sbest{63.9}$_{4.8}$ & \sbest{85.9}$_{2.1}$ & \sbest{70.8}$_{2.4}$ & \sbest{73.4}$_{1.8}$ & \sbest{66.3}$_{12.6}$ & \sbest{59.2}$_{15.1}$ & \sbest{63.0}$_{8.2}$ & \sbest{94.1}$_{4.2}$ & \sbest{81.4}$_{5.1}$ & \sbest{82.4}$_{5.6}$\\
     & UNI & 91.1$_{2.7}$ & 83.6$_{3.0}$ & 83.6$_{3.0}$ & 98.9$_{0.3}$ & 89.2$_{1.9}$ & 91.0$_{1.8}$ & 87.2$_{7.5}$ & 75.8$_{9.2}$ & 76.9$_{7.7}$ & 98.0$_{0.9}$ & 88.0$_{2.5}$ & 90.3$_{1.6}$ \\
     \rowcolor{gray!20} \multirow{-4}{*}{\textbf{16}} & MR-UNI & \best{94.8}$_{2.8}$ & \best{87.0}$_{3.6}$ & \best{87.0}$_{3.6}$ & \best{98.9}$_{0.4}$ & \best{91.1}$_{2.4}$ & \best{92.4}$_{1.8}$ & \best{91.7}$_{4.6}$ & \best{82.6}$_{6.0}$ & \best{82.9}$_{5.6}$ & \best{98.5}$_{0.6}$ & \best{91.0}$_{1.9}$ & \best{92.2}$_{1.5}$\\
    \bottomrule
    \end{tabular}
    }
    \label{tab:ablation_encoder_appendix}
\end{table}

\begin{table}[t!]
    \small
    \caption{Performance of ABMIL and CATE, with and without our MR block (MR-ABMIL and MR-CATE), under 32- and 64-shot settings on TCGA-NSCLC and TCGA-RCC datasets.}
    \centering
    \begin{tabular}{C{0.1cm}p{1.8cm}C{0.4cm}|C{1cm}C{1cm}C{1cm}C{1cm}C{1cm}C{1cm}C{1cm}C{1cm}C{1cm}}
    \toprule
    \bf \multirow{2}{*}{$k$} & \bf \multirow{2}{*}{Methods} & \bf \multirow{2}{*}{\#P} & \multicolumn{3}{c}{\textbf{NSCLC}} & \multicolumn{3}{c}{\textbf{RCC}} \\
    & & & \textbf{AUC$\uparrow$} & \textbf{F1$\uparrow$} & \textbf{Acc.$\uparrow$} & \textbf{AUC$\uparrow$} & \textbf{F1$\uparrow$} & \textbf{Acc.$\uparrow$} \\
    \midrule
    & ABMIL & 0.26 & 96.2$_{1.9}$ & 89.7$_{3.6}$ & 89.7$_{3.6}$ & 97.6$_{1.8}$ & 91.1$_{2.0}$ & 93.1$_{2.0}$\\
    \rowcolor{gray!20} \cellcolor{white}
     & \textbf{MR}-ABMIL & 0.18 & \best{98.1}$_{0.4}$ & \best{93.1}$_{1.7}$ & \best{93.1}$_{1.7}$ & \sbest{99.5}$_{0.3}$ & \best{93.4}$_{0.6}$ & \best{94.9}$_{0.6}$ \\
    & CATE & 1.93 & 97.0$_{0.4}$ & 91.1$_{1.6}$ & 91.1$_{1.6}$ & 99.3$_{0.3}$ & 91.0$_{2.0}$ & 93.1$_{1.6}$\\
    \rowcolor{gray!20} \cellcolor{white}
    \multirow{-4}{*}{\textbf{32}} & \textbf{MR}-CATE & 0.84 & \sbest{97.7}$_{0.5}$ & \sbest{92.1}$_{1.6}$ & \sbest{92.1}$_{1.6}$ & \best{99.6}$_{0.1}$ & \sbest{92.9}$_{2.2}$ & \sbest{94.5}$_{1.9}$ \\

    \midrule
    & ABMIL & 0.26 & 97.5$_{0.9}$ & 91.5$_{1.9}$ & 91.5$_{1.9}$ & 99.0$_{0.6}$ & 91.6$_{2.9}$ & 93.2$_{2.5}$\\
    \rowcolor{gray!20} \cellcolor{white}
     & \textbf{MR}-ABMIL & 0.18 & \best{98.2}$_{0.3}$ & \best{92.8}$_{1.2}$ & \best{92.8}$_{1.2}$ & \sbest{99.6}$_{0.2}$ & \sbest{94.3}$_{1.1}$ & \sbest{95.6}$_{1.4}$ \\
    & CATE & 1.93 & 97.7$_{0.5}$ & 91.1$_{1.4}$ & 91.1$_{1.4}$ & 99.4$_{0.2}$ & 93.6$_{1.1}$ & 95.0$_{0.8}$\\
    \rowcolor{gray!20} \cellcolor{white}
    \multirow{-4}{*}{\textbf{64}} & \textbf{MR}-CATE & 0.84 & \sbest{97.9}$_{0.1}$ & \sbest{92.4}$_{1.1}$ & \sbest{92.4}$_{1.1}$ & \best{99.8}$_{0.1}$ & \best{95.3}$_{1.0}$ & \best{96.6}$_{0.8}$ \\
    \bottomrule
    \end{tabular}
    \label{tab:main_exp_more_shots_appendix}
\end{table}

\section{Proof of Approximation Capabilities of Manifold Residual Block}\label{sec:proof_of_approximation}
In this section, we prove that our MR block can approximate any linear layer to arbitrary precision, thereby providing a worst-case performance guarantee: if the MR block cannot improve over the linear layer then it can match its behavior arbitrarily closely, ensuring no loss in worst-case performance. In addition, rather than directly proving our formulation, we establish a more fundamental result without activation functions. Subsequently, with the universal approximation theorem \citep{barron1993universal,cybenko1989approximation,funahashi1989approximate,hornik1989multilayer,lu2020universal}, our architecture with nonlinear activation function can achieve even superior approximation precision.

\begin{theorem}[Universal Approximation by Manifold Residual Block]
Let $\bd{A}_{*}\in\mathbb{R}^{d_0\times d_1}$ have full rank $r_A=\min\{d_0,d_1\}$. Draw a frozen matrix $\bd{B}\in\mathbb{R}^{d_0\times d_1}$ with sub-Gaussian entries. Then almost surely: for every $\varepsilon>0$ there exist integers $r\le r_A$ and matrices $\bd{W}_2$ and $\bd{W}_1$ such that,
\begin{equation}
  \|\bd{A}_{*} - \bigl(\bd{B} + \bd{W}_2\,\bd{W}_1\bigr)\|_F \le \varepsilon, \text{ where } \bd{W}_2\in\mathbb{R}^{d_0\times r}, \text{ and }
  \bd{W}_1\in\mathbb{R}^{r\times d_1}.
\end{equation}
\end{theorem}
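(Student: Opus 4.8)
The plan is to reduce the statement to an elementary rank-factorization argument, treating the frozen matrix $\bd{B}$ as a fixed offset that the trainable low-rank term is free to cancel. First I would define the residual target $\bd{D} := \bd{A}_{*} - \bd{B} \in \mathbb{R}^{d_0 \times d_1}$. Since $\bd{D}$ is an arbitrary real matrix of these dimensions, its rank automatically satisfies $\operatorname{rank}(\bd{D}) \le \min\{d_0, d_1\} = r_A$, and I would set $r := \operatorname{rank}(\bd{D}) \le r_A$.

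Next I would invoke the thin singular value decomposition $\bd{D} = \bd{U}\bd{\Sigma}\bd{V}^{\top}$, where $\bd{U} \in \mathbb{R}^{d_0 \times r}$ and $\bd{V} \in \mathbb{R}^{d_1 \times r}$ have orthonormal columns and $\bd{\Sigma} \in \mathbb{R}^{r \times r}$ is diagonal with positive entries. Choosing $\bd{W}_2 := \bd{U}\bd{\Sigma} \in \mathbb{R}^{d_0 \times r}$ and $\bd{W}_1 := \bd{V}^{\top} \in \mathbb{R}^{r \times d_1}$ gives $\bd{W}_2\bd{W}_1 = \bd{D}$, hence $\bd{B} + \bd{W}_2\bd{W}_1 = \bd{A}_{*}$ \emph{exactly}. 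Therefore $\|\bd{A}_{*} - (\bd{B} + \bd{W}_2\bd{W}_1)\|_F = 0 \le \varepsilon$ for every $\varepsilon > 0$, with the advertised dimensions and with $r \le r_A$.

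I would then observe that this construction uses no property of $\bd{B}$ beyond its being a fixed matrix of shape $d_0 \times d_1$; in particular it is valid for \emph{every} realization of the random $\bd{B}$, so the almost-sure qualifier is automatic. The sub-Gaussian and full-rank hypotheses are not strictly needed for the existence claim itself: their natural role is to guarantee that $\bd{D}$ generically attains full rank $r_A$, which makes the bottleneck dimension tight (one cannot in general succeed with $r < r_A$) and ties the result to the low-rank inductive bias motivating the MR block.

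The main difficulty here is therefore conceptual rather than technical: the statement follows immediately from rank factorization, and the tolerance $\varepsilon$ is never binding because exact recovery is always available. If one wished to make the approximation genuinely nontrivial — by fixing a small rank budget $r < r_A$ and asking how small the residual can be — the relevant tool would be the Eckart--Young theorem, which identifies the optimal truncated SVD of $\bd{D}$ and bounds the error by the tail singular values $\bigl(\sum_{i>r}\sigma_i(\bd{D})^2\bigr)^{1/2}$; monotone decay of these tails as $r \to r_A$ then recovers the $\varepsilon$-statement in the strictly low-rank regime. Extending beyond the activation-free case, I would simply cite the universal approximation theorem to argue that reinstating the GELU nonlinearity can only enlarge the representable family, so the worst-case guarantee is preserved.
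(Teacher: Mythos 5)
Your proof is correct, and it shares its computational core with the paper's: both form the residual $\bd{E}=\bd{A}_{*}-\bd{B}$ and factor it through its SVD. The difference lies in what each does with that SVD. The paper truncates: it invokes Eckart--Young--Mirsky to pick $r\le r_A$ with $\|\bd{E}-\bd{E}_r\|_F\le\varepsilon$ and sets $\bd{W}_2=\bd{U}_r\boldsymbol\Sigma_r^{1/2}$, $\bd{W}_1=\boldsymbol\Sigma_r^{1/2}\bd{V}_r^\top$; it also first establishes, via a Johnson--Lindenstrauss argument, that $\bd{B}$ is full rank with controlled singular values --- a step that is never actually used in the remainder of the argument. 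You instead take $r=\operatorname{rank}(\bd{E})\le r_A$ and factor exactly, so the error is $0$, and the $\varepsilon$, the almost-sure qualifier, the sub-Gaussianity of $\bd{B}$, and the full-rank assumption on $\bd{A}_{*}$ all become vacuous. Your route is more elementary and exposes something the paper's presentation obscures: the theorem as stated is a deterministic triviality about rank factorization, since any $d_0\times d_1$ residual automatically has rank at most $r_A$. What the paper's truncation buys is the practically relevant regime of a fixed small rank budget $r<\operatorname{rank}(\bd{E})$, where the error is the singular-value tail $\bigl(\sum_{i>r}\sigma_i^2(\bd{E})\bigr)^{1/2}$ --- exactly the refinement you sketch at the end via Eckart--Young, so you have in effect recovered the paper's argument as the nontrivial case of your own. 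The two constructions differ only cosmetically in how they split $\boldsymbol\Sigma$ between the factors.
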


\begin{proof}
\textbf{Full-Rank Guarantees and Singular-Value Bounds.}
We prove that $\bd{B}$ is full rank almost surely in \cref{sec:full_rank_proof_appendix}. Let $\delta,\eta\in(0,1)$, according to Johnson–Lindenstrauss Lemma, we have,
\begin{equation}
    (1-\delta)\|\bd{x}\|_2 \;\le\; \tfrac{1}{\sqrt{d_0}}\|\bd{B}\,\bd{x}\|_2 \;\le\; (1+\delta)\|\bd{x}\|_2 \implies \forall\sigma(\bd{B})\in\;[\sqrt{d_0}\,(1-\delta),\;\sqrt{d_0}\,(1+\delta)],
\end{equation}
with probability at least $1-\eta$, if the input dimension satisfies $d_0 \;\ge\; C(d_1 + \ln(1/\eta))/\delta^2$.

\textbf{Truncated SVD of the Residual.} Let $\bd{E}=\bd{A}_{*}-\bd{B}$. Its singular value decomposition is given as $\bd{E} = \bd{U}\,\boldsymbol\Sigma\,\bd{V}^\top$. By the Eckart–Young–Mirsky theorem, we choose $r \le r_A$ and we have,
\begin{equation}
  \|\bd{E}-\bd{E}_r\|_F
  = \sqrt{\sum_{i=r+1}^{\operatorname{rank}(\bd{E})} \sigma_i^2(\bd{E})} \le \varepsilon, \text{ where } \bd{E}_r = \sum_{i=1}^r \sigma_i\,\bd{u}_i\,\bd{v}_i^\top.
\end{equation}

\textbf{Low-rank Correction Construction.}
From the last step, we can set $\bd{W}_1$ and $\bd{W}_2$ as, 
\begin{equation}
  \bd{W}_2 = [\,\bd{u}_1,\dots,\bd{u}_r\,]\;\boldsymbol\Sigma_r^{1/2},
  \quad
  \bd{W}_1 = \boldsymbol\Sigma_r^{1/2}\,[\,\bd{v}_1,\dots,\bd{v}_r\,]^\top,
  \text{ where }
  \boldsymbol\Sigma_r=\mathrm{diag}(\sigma_1,\dots,\sigma_r), 
\end{equation}
which satisfies $\bd{W}_2\bd{W}_1 = \bd{E}_r$ and therefore, for sufficiently large $r$, we have 
\begin{equation}
  \|\bd{A}_{*} - (\bd{B}+\bd{W}_2\bd{W}_1)\|_F
  = \|\bd{E}-\bd{E}_r\|_F
  < \varepsilon.
\end{equation}
\end{proof}

\section{Additional Proofs}
\subsection{Randomly Initialized Matrix Has Full Rank}\label{sec:full_rank_proof_appendix}

\begin{theorem}
Let $\bd{W}\in\R^{d_0\times d_1}$ have entries drawn i.i.d.\ from a continuous distribution (e.g.\ Kaiming--uniform with gain $\sqrt5$).  Then with probability one,
\begin{equation}
  \operatorname{rank}(\bd{W})
  \;=\;\min\{d_0,d_1\}.
\end{equation}
\end{theorem}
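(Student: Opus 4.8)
The plan is to show that the rank-deficient event is contained in the zero set of a single fixed nonzero polynomial in the $d_0 d_1$ matrix entries, and that this zero set carries probability zero under i.i.d.\ continuous sampling. Write $m=\min\{d_0,d_1\}$; full rank means $\operatorname{rank}(\bd{W})=m$, so it suffices to prove $\Pr[\operatorname{rank}(\bd{W})<m]=0$.

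First I would reduce rank-deficiency to the vanishing of one fixed minor. Choose any $m\times m$ submatrix $\bd{M}_0$ of $\bd{W}$, say the leading one. If $\operatorname{rank}(\bd{W})<m$ then \emph{every} $m\times m$ minor vanishes, and in particular $\det\bd{M}_0=0$, giving
\begin{equation}
  \{\operatorname{rank}(\bd{W})<m\}\subseteq\{\det\bd{M}_0=0\}.
\end{equation}
The map $p(\bd{W})=\det\bd{M}_0$ is a polynomial in the entries of $\bd{W}$, and it is \emph{not} identically zero: placing the identity matrix in the $\bd{M}_0$ block yields $p=1$. So everything reduces to bounding the probability that a fixed nonzero polynomial vanishes.

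The technical heart is the lemma that the zero set of a nonzero polynomial $p:\R^{n}\to\R$ has Lebesgue measure zero. I would prove this by induction on $n$ via Fubini: expand in the last variable $t$ as $p(\bd{y},t)=\sum_{k=0}^{D}c_k(\bd{y})\,t^{k}$ with some $c_{k}\not\equiv0$. For each fixed $\bd{y}$ outside the null set $\{c_{k}=0\}$ (null by the inductive hypothesis), $p(\bd{y},\cdot)$ is a nonzero univariate polynomial with finitely many roots, so each such $t$-slice is null; integrating the slice measures over $\bd{y}$ gives total measure zero. The main obstacle is exactly this measure-theoretic bookkeeping — matching the exceptional $\bd{y}$-set against the finite-root slices — not anything particular to matrices.

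Finally I would transfer from Lebesgue measure to the sampling law. Since the entries are i.i.d.\ from a continuous (absolutely continuous) distribution, the joint law of $\bd{W}$ on $\R^{d_0 d_1}$ is absolutely continuous with respect to Lebesgue measure, its density being the product of the marginal densities. Hence any Lebesgue-null set, in particular $\{\det\bd{M}_0=0\}$, receives probability zero; combining this with the inclusion above yields $\Pr[\operatorname{rank}(\bd{W})<m]=0$, i.e.\ $\operatorname{rank}(\bd{W})=m$ almost surely. An alternative route constructs $\bd{W}$ one row at a time and observes that each new row avoids the span of the preceding rows with probability one, since that span is a proper (hence null) subspace; but the single-minor/polynomial argument is cleaner to state and avoids conditioning subtleties.
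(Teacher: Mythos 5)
Your proof is correct and takes essentially the same route as the paper: reduce rank-deficiency to the vanishing of a nonzero determinant polynomial in the matrix entries, invoke the fact that the zero set of a nonzero polynomial is Lebesgue-null, and transfer to the sampling law via absolute continuity. The only differences are minor refinements in your favor — you work with a single fixed $m\times m$ minor rather than the paper's union over all minors, and you actually prove the measure-zero lemma by Fubini induction where the paper cites it as basic measure theory.
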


\begin{proof}
Set $r=\min\{d_0,d_1\}$.  Define the singular set
\begin{equation}
  S = \{\bd{W}:\operatorname{rank}(\bd{W})<r\}.
\end{equation}
A matrix in $S$ must make every $r\times r$ minor vanish.  Label those minors by index sets $I$, and write
\begin{equation}
  Z_I = \{\bd{W}:\det_I(\bd{W})=0\}.
\end{equation}
Each $\det_I$ is a nonzero polynomial in the entries of~$\bd{W}$, so by basic measure theory its zero set $Z_I$ has Lebesgue measure zero.  Since there are finitely many $I$,
\begin{equation}
  S = \bigcup_I Z_I
\end{equation}
is a finite union of null sets, hence also null.  Finally, the distribution on entries is absolutely continuous with respect to Lebesgue measure, so
\begin{equation}
  \Pr(\bd{W}\in S) = 0,
\end{equation}
and therefore $\bd{W}$ has full rank $r$ almost surely.
\end{proof}

\subsection{Maximum Rank of the LRP}
\begin{theorem}[Rank of a Low-Rank Product]
Let $\bd{W}_2\in\mathbb{R}^{m\times r}$ and $\bd{W}_1\in\mathbb{R}^{r\times n}$ with $r < \min(m,n)$.  Then
\begin{equation}
\operatorname{rank}\bigl(\bd{W}_2\bd{W}_1\bigr)\;\le\;r.
\end{equation}
\end{theorem}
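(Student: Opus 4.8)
The plan is to use the standard characterization of rank as the dimension of the column space, together with the observation that left-multiplication by $\bd{W}_2$ can only shrink (never enlarge) the column space. First I would note that every column of the product $\bd{W}_2\bd{W}_1$ is a linear combination of the columns of $\bd{W}_2$: the $j$-th column of $\bd{W}_2\bd{W}_1$ equals $\bd{W}_2$ applied to the $j$-th column of $\bd{W}_1$, and therefore lies in the column space of $\bd{W}_2$. Consequently the column space of $\bd{W}_2\bd{W}_1$ is a subspace of the column space of $\bd{W}_2$.

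Taking dimensions of this subspace inclusion yields $\operatorname{rank}(\bd{W}_2\bd{W}_1)\le\operatorname{rank}(\bd{W}_2)$. Since $\bd{W}_2\in\mathbb{R}^{m\times r}$ has only $r$ columns, its column space is spanned by at most $r$ vectors, so $\operatorname{rank}(\bd{W}_2)\le r$. Chaining these two inequalities gives $\operatorname{rank}(\bd{W}_2\bd{W}_1)\le r$, which is exactly the claim. An equally short alternative argues via the row space: every row of $\bd{W}_2\bd{W}_1$ is a linear combination of the rows of $\bd{W}_1$, so the row space of the product is contained in that of $\bd{W}_1$, which is spanned by at most $r$ rows, again bounding the rank by $r$.

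Both routes rely only on the monotonicity of rank under matrix multiplication, a textbook fact, so there is no genuine obstacle here; the single point worth stating carefully is the elementary inclusion of column (or row) spaces, after which everything is immediate. It is worth remarking that the hypothesis $r<\min(m,n)$ is not actually required for the inequality itself—the bound $\operatorname{rank}(\bd{W}_2\bd{W}_1)\le r$ holds for any inner dimension $r$. The strict inequality merely records that the factorization is a genuine rank-deficient \emph{bottleneck}, which is the structural low-rank property that the LRP pathway of the MR block is designed to exploit.
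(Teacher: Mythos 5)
Your proof is correct and takes essentially the same route as the paper's: both arguments reduce the claim to $\operatorname{rank}(\bd{W}_2\bd{W}_1)\le\operatorname{rank}(\bd{W}_2)\le r$ (or symmetrically via $\bd{W}_1$), using that $\bd{W}_2$ has only $r$ columns and $\bd{W}_1$ only $r$ rows. The sole difference is that the paper cites the product-rank inequality $\operatorname{rank}(AB)\le\min\bigl(\operatorname{rank}(A),\operatorname{rank}(B)\bigr)$ as a standard fact, whereas you derive it from the column-space (respectively row-space) inclusion, a self-contained unfolding of the same lemma; your remark that the hypothesis $r<\min(m,n)$ is not needed for the inequality itself is also correct.
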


\begin{proof}
Observe that
\begin{equation}
\operatorname{rank}(\bd{W}_2)\;\le\;r,
\qquad
\operatorname{rank}(\bd{W}_1)\;\le\;r,
\end{equation}
since $\bd{W}_2$ has only $r$ columns and $\bd{W}_1$ has only $r$ rows.  A standard rank‐inequality for matrix products states
\begin{equation}
\operatorname{rank}(AB)\;\le\;\min\bigl(\operatorname{rank}(A),\,\operatorname{rank}(B)\bigr)
\end{equation}
for any conformable $A,B$.  Applying this with $A=\bd{W}_2$ and $B=\bd{W}_1$ gives
\begin{equation}
\operatorname{rank}\bigl(\bd{W}_2\bd{W}_1\bigr)
\;\le\;\min\bigl(\operatorname{rank}(\bd{W}_2),\,\operatorname{rank}(\bd{W}_1)\bigr)
\;\le\;r.
\end{equation}
\end{proof}

\begin{figure}[!t]
    \centering
    \includegraphics[width=0.6\linewidth]{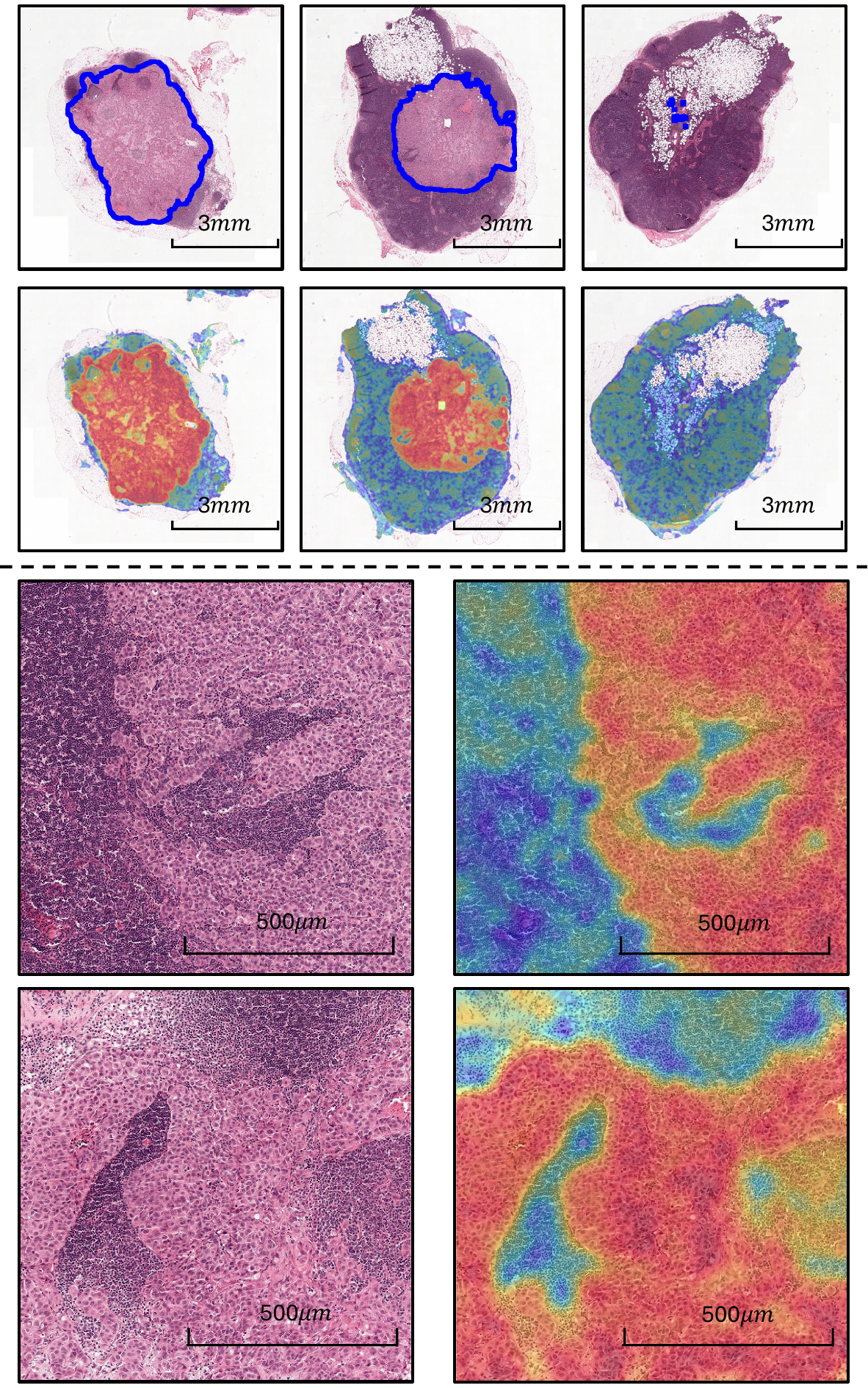}
    \caption{MR-CATE-generated heatmaps for tumor058 (Camelyon16). The left panel shows heatmaps for three biopsy regions, while the right panel displays fine-grained patches with their heatmaps. Red indicates high attention (tumor), whereas blue indicates low attention (normal tissue).}
    \label{fig:heatmap}
\end{figure}

\section{Experiment Setups}\label{sec:experiment_setups}
\subsection{Dataset Descriptions}\label{sec:dataset_appendix}
\textbf{Camelyon16} \citep{litjens_1399_2018} comprises 397 WSIs of lymph node sections from breast cancer patients, annotated for metastatic presence. The dataset is officially partitioned into 268 training specimens (157 normal, 111 tumor-containing) and 129 testing specimens.

\textbf{TCGA-RCC}\footnote{The TCGA data used in our work is available in https://portal.gdc.cancer.gov.} encompasses 940 WSIs from three distinct renal cell carcinoma subtypes: Kidney Chromophobe (TCGA-KICH, 121 WSIs from 109 cases), Kidney Renal Clear Cell Carcinoma (TCGA-KIRC, 519 WSIs from 513 cases), and Kidney Renal Papillary Cell Carcinoma (TCGA-KIRP, 300 WSIs from 276 cases).

\textbf{TCGA-NSCLC} consists of 1,053 WSIs representing two major lung cancer histological subtypes: Lung Squamous Cell Carcinoma (TCGA-LUSC, 512 WSIs from 478 cases) and Lung Adenocarcinoma (TCGA-LUAD, 541 WSIs from 478 cases).

\textbf{Boehmk}~\citep{boehm2022multimodal} is a treatment-response cohort of high-grade serous ovarian cancer patients, with H\&E-stained WSIs and associated progression-free survival (PFS) metadata. We frame this as a binary response prediction task using a 12-month landmark analysis: patients with PFS $\geq 12$ months are labeled as responders, those with documented progression before 12 months as non-responders, and patients censored before 12 months are excluded.

\textbf{Trastuzumab}~\citep{farahmand2022her2} is a HER2-positive breast cancer cohort comprising pre-treatment H\&E tumor WSIs and clinical metadata describing response to trastuzumab-based therapy. We treat this as an intrinsically few-shot, slide-level treatment-response task by assigning each patient a binary label (responder vs.\ non-responder) based on the curated clinical ``Responder'' status.

\subsection{Experimental Protocol and Hyper-parameter Setting}\label{sec:exp_hyper}

\textbf{Experimental Protocol.} We benchmark our approach on established multiple instance learning frameworks: AB-MIL~\citep{ilse2018attention} (attention-based method), TransMIL~\citep{shao2021transmil} (Transformer-based method), CATE~\citep{huang2024free} (information-theory-based method), DGRMIL~\citep{zhu2024dgr} (density modeling method), and RRTMIL~\citep{tang2024feature} (re-embedding method). Regularization techniques such as dropout, weight decay are used exactly as in the official implementations. In addition, we also compare our methods with ViLaMIL~\citep{shi2024vila}, the recent visual language models specifically proposed for few-shot WSI classification. All implementations utilize their official codes. We employ early stopping with a patience of 20 epochs on validation loss, while enforcing a minimum training duration of 50 epochs. For our experiments, we adhere to the official training and test partitions for the Camelyon16 dataset. We further split the official training set into new training and validation sets using a 70/30 ratio. For the rest of the datasets, we perform our own random split, partitioning the data into training, validation, and test sets with a 60/20/20 ratio. From these constructed training sets, we then create our few-shot scenarios by randomly sampling $k$ WSIs per class to create five different subsets for each fold of our cross-validation. The average performance and standard deviation across these five runs is reported.

\textbf{Hyper-parameters Details.}
To rigorously assess the universal, plug-and-play value of our block, we adopt a strict paired-comparison protocol under a unified hyperparameter setting. For each baseline architecture, we evaluate its original and MR-enhanced versions while keeping all other conditions identical, ensuring the observed performance gain is solely attributable to our block. We use the AdamW \citep{loshchilov2018decoupled} optimizer with learning rate and weight decay being $5\times10^{-4}$ and  $10^{-5}$. The linear scheduler is utilized for our methods with starting factor and end factor being $0.01$ and $0.1$, respectively. Dropout is set to $0.25$. We use CONCH \citep{lu2024avisionlanguage}, UNI \citep{chen2024uni} and ResNet-50 \citep{he2016deep} to extract features from the non-overlapping $224\times 224$ patches, obtained from $20\times$ magnification of the WSIs, and the resulting feature dimensions are $512$, $1{,}024$ and $1{,}024$, respectively. The CONCH features are utilized throughout our experiments as they demonstrate superior performance to the other two features. The UNI and ResNet-50 features are only utilized for investigating the robustness of our method against features from different foundation models. The rank $r$ is set to $64$. GELU \citep{hendrycks2016gaussian} is chosen as it yields the best performance among the alternatives.

\subsection{Computer Resources}
We conduct our experiments on an NVIDIA A100 GPU and Intel(R) Xeon(R) Silver 4410Y CPU, with Ubuntu system version 22.04.5 LTS (GNU/Linux 5.15.0-136-generic x86\_64). More detailed Python environment would be released with the code.

\subsection{Replacement Position of the Models}\label{sec:replacement_appendix}

\textbf{Preliminaries of ABMIL.} ABMIL can be formulated as follows,
\begin{equation}
    a_k = \frac{ \exp\left\{ \bd{w}^\top \left( \tanh(\bd{V}\bd{h}_k^\top) \odot \mathrm{sigm}(\bd{U}\bd{h}_k^\top) \right) \right\} }{ \sum_{j=1}^{K} \exp\left\{ \bd{w}^\top \left( \tanh(\bd{V}\bd{h}_j^\top) \odot \mathrm{sigm}(\bd{U}\bd{h}_j^\top) \right) \right\}},\quad \quad \bd{z} = \sum_{k=1}^{K} a_k \bd{h}_k,
\end{equation}
where $\bd{z}$ is the slide-level feature. 

ABMIL: replace both the $\bd{V}$ and $\bd{U}$ matrix with our MR block.

TransMIL\footnote{https://github.com/szc19990412/TransMIL}: only replace the out matrix in Nystr{\"o}m attention implementation with our MR block.

CATE\footnote{https://github.com/HKU-MedAI/CATE}: replace all linear layers in \texttt{x\_linear}, \texttt{interv\_linear}, feature output linear layer and the second linear layer in \texttt{encoder\_IB} with our MR block.

DGRMIL\footnote{https://github.com/ChongQingNoSubway/DGR-MIL}: replace linear layers in \texttt{self.fc1} of \texttt{optimizer\_triple} and \texttt{self.crossffn} of \texttt{DGRMIL}.

RRTMIL\footnote{https://github.com/DearCaat/RRT-MIL}: replace linear layer in \texttt{self.proj} of \texttt{InnerAttention}.

More details will be available along with the code.

\subsection{Computation Consumption of MR Block}\label{sec:computation_consumption}
In this section, we present FLOPs and wall-clock inference time for the MR-enhanced backbones under different bag sizes and feature dimensions in \cref{tab:flops_time_featdim_512_1024}.

\begin{table}[t]
    \centering
    \small
    \begin{tabular}{l c cc cc}
        \toprule
        & & \multicolumn{2}{c}{512-d features (CONCH)} & \multicolumn{2}{c}{1024-d features (UNI/ResNet50)} \\
        \cmidrule(lr){3-4} \cmidrule(lr){5-6}
        Model & Bag Size & FLOPs (G) & Time (ms) & FLOPs (G) & Time (ms) \\
        \midrule
        \multirow{3}{*}{ABMIL}    & 100    & 0.053   & 0.90  & 0.420   & 1.44 \\
            & 1000   & 0.525   & 2.66  & 4.197   & 2.73 \\
            & 10000  & 5.251   & 2.78  & 41.974  & 3.37 \\
        \multirow{3}{*}{CATE}     & 100    & 0.333   & 2.69  & 1.541   & 2.43 \\
             & 1000   & 3.328   & 3.12  & 15.410  & 3.12 \\
             & 10000  & 33.283  & 10.36 & 154.102 & 10.05 \\
        \multirow{3}{*}{TransMIL} & 100    & 1.153   & 12.33 & 4.822   & 12.27 \\
         & 1000   & 6.076   & 16.25 & 26.403  & 15.66 \\
         & 10000  & 49.832  & 46.23 & 220.299 & 44.54 \\
        \multirow{3}{*}{DGRMIL}   & 100    & 1.332   & 12.19 & 5.560   & 12.18 \\
           & 1000   & 4.862   & 12.75 & 21.566  & 12.77 \\
           & 10000  & 43.431  & 31.36 & 194.718 & 31.59 \\
        \multirow{3}{*}{RRTMIL}   & 100    & 1.008   & 3.01  & 4.243   & 3.13 \\
           & 1000   & 3.227   & 3.52  & 15.006  & 5.92 \\
           & 10000  & 30.286  & 13.04 & 142.116 & 46.47 \\
        \bottomrule
    \end{tabular}
    \caption{FLOPs and inference time for different MIL models under varying bag sizes and feature dimensions.}
    \label{tab:flops_time_featdim_512_1024}
\end{table}

\section{Additional Experimental Results}\label{sec:exp_appendix}
The additional experimental results are presented in \cref{tab:ablation_encoder_appendix,tab:ablation_initialization_appendix,tab:ablation_position_appendix}. The discussions of these tables are already available in \cref{sec:exp}.
\section{Limitations and Future Work}
This work introduces an MR block to mitigate the manifold degradation of the vanilla linear layer while enabling parameter-efficient task-specific adaptation. However, our empirical evaluation has several limitations. First, we fix the rank $r$ across all layers, a choice that is unlikely to be optimal in real-world scenarios. A more comprehensive exploration of different rank values within a single model would better characterize the trade-off between model size and accuracy. Second, our study of insertion positions is limited to the ABMIL architecture; to establish broader applicability, the MR block should also be tested in more sophisticated MIL frameworks such as TransMIL and CATE. Third, our geometric anchor now is frozen and does not contain any dataset-level information. An interesting extension is to replace the random anchor with a data-driven PCA projection, at the cost of dataset-specific precomputation for a more comprehensive utilization of dataset-level information. Fourth, we do not evaluate MR on top of slide-level pathology foundation models such as TITAN~\citep{ding2025multimodal}, because this would require a non-trivial architectural redesign (MR cannot simply replace the tiny slide-level classifier, and TITAN is tightly coupled to CONCH v1.5 while our experiments span CONCH, UNI, and ResNet50). In addition, our geometric analysis relies on approximately 30,000 patch-level features per configuration, whereas TITAN only provides slide-level embeddings per WSI, so directly reusing the same analysis would yield much less stable statistics. We treat these limitations as future directions of our work.

We anticipate that the proposed MR block will broadly advance few-shot learning within computational pathology and beyond. Its minimal deployment requirements render it readily applicable in a wide range of settings. In computational pathology, it may enhance tasks such as few-shot report generation, survival outcome prediction, mutation inference, and treatment-response modeling, among others. More generally, the MR block could improve performance in computer vision applications, including few-shot segmentation and classification, and may even be adapted to natural-language-processing tasks. Further exploration across these domains is therefore warranted.

\end{document}